\title{\textbf{Information-theoretic Limits for Community Detection in Network Models}}
\author{%
  \textbf{Chuyang Ke}\\Department of Computer Science\\Purdue University\\\texttt{cke@purdue.edu}
  \and 
  \textbf{Jean Honorio}\\Department of Computer Science\\Purdue University\\\texttt{jhonorio@purdue.edu}
}
\def\1{{\bf{1}}}
\def\0{{\bf{0}}}
\def\I{{\bf{I}}}
\def\P{{\mathbb{P}}}
\def\E{{\mathbb{E}}}
\def\R{{\mathbb{R}}}
\newtheorem{theorem}{Theorem}
\newtheorem{corollary}{Corollary}
\newtheorem{lemma}{Lemma}
\newtheorem{definition}{Definition}
\DeclareMathOperator*{\minimize}{minimize}
\date{}
\begin{document}
\maketitle

\begin{abstract}
We analyze the information-theoretic limits for the recovery of node labels in several network models. This includes the Stochastic Block Model, the Exponential Random Graph Model, the Latent Space Model, the Directed Preferential Attachment Model, and the Directed Small-world Model. For the Stochastic Block Model, the non-recoverability condition depends on the probabilities of having edges inside a community, and between different communities. For the Latent Space Model, the non-recoverability condition depends on the dimension of the latent space, and how far and spread are the communities in the latent space. For the Directed Preferential Attachment Model and the Directed Small-world Model, the non-recoverability condition depends on the ratio between homophily and neighborhood size. We also consider dynamic versions of the Stochastic Block Model and the Latent Space Model.
\end{abstract}

\section{Introduction} 
Network models have already become a powerful tool for researchers in various fields. With the rapid expansion of online social media including \textit{Twitter}, \textit{Facebook}, \textit{LinkedIn} and \textit{Instagram}, researchers now have access to more real-life network data and network models are great tools to analyze the vast amount of interactions \cite{goldenberg2010survey,abbe2016exact,abbe2017community,kim2017review}. Recent years have seen the applications of network models in machine learning \cite{ball2011efficient, wu2015clustering,linden2003amazon}, bioinformatics \cite{cabreros2016detecting,girvan2002community,cline2007integration}, as well as in social and behavioral researches \cite{newman2002random,fortunato2010community}.

Among these literatures one of the central problems related to network models is community detection. In a typical network model, nodes represent individuals in a social network, and edges represent interpersonal interactions. The goal of community detection is to recover the label associated with each node (i.e., the community where each node belongs to). The exact recovery of 100\% of the labels has always been an important research topic in machine learning, for instance, see \cite{abbe2016exact,chen2014statistical,jog2015information,saad2017side}.

One particular issue researchers care about in the recovery of network models is the relation between the number of nodes, and the proximity between the likelihood of connecting within the same community and across different communities. For instance, consider the Stochastic Block Model, in which $p$ is the probability for connecting two nodes in the same community, and $q$ is the probability for connecting two nodes in different communities. Clearly if $p$ equals $q$, it is impossible to identify the communities, or equivalently, to recover the labels for all nodes. Intuitively, as the difference between $p$ and $q$ increases, labels are easier to be recovered. 

In this paper, we analyze the information-theoretic limits for community detection. Our main contribution is the comprehensive study of several social networks used in the literature. To accomplish that task, we carefully construct restricted ensembles. The key idea of using restricted ensembles is that for any learning problem, if a subclass of models is difficult to be learnt, then the original class of models will be at least as difficult to be learnt. The use of restricted ensembles is customary for information-theoretic lower bounds \cite{santhanam2012information,wang2010information}.

We provide a series of novel results in this paper. While the information-theoretic limits of the Stochastic Block Model have been heavily studied (in slightly different ways), none of the other models considered in this paper have been studied before. Thus, we provide new information-theoretic results for the Exponential Random Graph Model, the Latent Space Model, the Directed Preferential Attachment Model, and the Directed Small-world Model. We also provide new results for dynamic versions of the Stochastic Block Model and the Latent Space Model.

Table \ref{table:compare} summarizes our results.

\begin{table}
\centering
\caption{Comparison of network models (S - static; UD - undirected dynamic; DD - directed dynamic)}
\label{table:compare}
\begin{tabular}{|c|c|c|c|c|}
\hline
Type                    & Model                & Our Result                                                                               & Previous Result                                                  & Thm. No.       \\ \hline
\multirow{2}{*}{S} & \multirow{2}{*}{SBM} & \multirow{2}{*}{$\frac{(p-q)^2}{q(1-q)} \leq \frac{2\log 2}{n} - \frac{4\log 2}{n^2}$}   & $\frac{(p-q)^2}{p+q}\leq \frac{2}{n}, p+q > \frac{2}{n}$\cite{mossel2012stochastic} & \multirow{2}{*}{Thm. 1} \\ \cline{4-4}
                        &                      &                                                                                          & \multicolumn{1}{l|}{$\frac{(p-q)^2}{q(1-q)} \leq O(\frac{1}{n})$\cite{chen2014statistical}}                          &                 \\ \hline
S                  & ERGM                 & $2(\cosh \beta -1) \leq \frac{2\log 2}{n} - \frac{4\log 2}{n^2}$                         & Novel                                                            &Cor. 1                   \\ \hline                        
S                  & LSM                  & $(4\sigma^2 + 1)^{-1-p/2} \Vert\mu\Vert_2^2 \leq \frac{\log 2}{2n} - \frac{\log 2}{n^2}$ & Novel                                                            &Thm. 2                   \\ \hline

UD                 & DSBM                 & $\frac{(p-q)^2}{q(1-q)} \leq \frac{(n-2)\log 2}{n^2-n} $                                  & Novel                                                            & Thm. 3                  \\ \hline
UD                 & DLSM                 & $(4\sigma^2 + 1)^{-1-p/2} \Vert\mu\Vert_2^2 \leq \frac{(n-2)\log 2}{4n^2 - 4n}$          & Novel                                                            & Thm. 4                   \\ \hline
DD                 & DPAM                  & $(s+1)/8m \leq 2^{(n-2)/(n^2-n)}/n^2$                                                    & Novel                                                            & Thm. 5                   \\ \hline
DD                 & DSWM                  & $(s+1)^2/(mp(1-p)) \leq 2^{2(n-2)/n^2}/n$                                                                                         & Novel                                                            & Thm. 6                   \\ \hline
\end{tabular}
\end{table}

\section{Static Network Models}
In this section we analyze the information-theoretic limits for two static network models: the Stochastic Block Model (SBM) and the Latent Space Model (LSM). Furthermore, we include a particular case of the Exponential Random Graph Model (ERGM) as a corollary of our results for the SBM. We call these static models, because in these models edges are independent of each other.

\subsection{Stochastic Block Model}
Among different network models the Stochastic Block Model (SBM) has received particular attention. Variations of the Stochastic Block Model include, for example, symmetric SBMs \cite{abbe2015community}, binary SBMs \cite{saad2017side,deshpande2016asymptotic}, labelled SBMs \cite{yun2016optimal, jog2015information,xu2014edge,heimlicher2012community}, and overlapping SBMs \cite{airoldi2008mixed}. For regular SBMs \cite{mossel2012stochastic} and \cite{chen2014statistical} showed that under certain conditions recovering the communities in a SBM is fundamentally impossible. Our analysis for the Stochastic Block Model follows the method used in \cite{chen2014statistical} but we analyze a different regime. In \cite{chen2014statistical}, two clusters are required to have the equal size (Planted Bisection Model), while in our SBM setup, nature picks the label of each node uniformly at random. Thus in our model only the expectation of the sizes of the two communities are equal.

We now define the Stochastic Block Model, which has two parameters $p$ and $q$.

\begin{definition}[Stochastic Block Model]
Let $0<q<p<1$. A Stochastic Block Model with parameters $(p,q)$ is an undirected graph of $n$ nodes with the adjacency matrix $A$, where each $A_{ij} \in \{0,1\}$. Each node is in one of the two classes \{+1, -1\}. The distribution of true labels $Y^\ast = (y^\ast_1,\ldots,y^\ast_n)$ is uniform, i.e., each label $y^\ast_i$ is assigned to $+1$ with probability $0.5$, and $-1$ with probability $0.5$.

The adjacency matrix $A$ is distributed as follows: if $y^\ast_i = y^\ast_j$ then $A_{ij}$ is Bernoulli with parameter $p$; otherwise $A_{ij}$ is Bernoulli with parameter $q$.
\label{def:sbm}
\end{definition}

The goal is to recover labels $\hat{Y} = (\hat{y}_1,\dots,\hat{y}_n)$ that are equal to the true labels $Y^\ast$, given the observation of $A$. We are interested in the information-theoretic limits. Thus, we define the Markov chain $Y^\ast \to A \to \hat{Y}$. Using Fano's inequality, we obtain the following results.\\

\begin{theorem}
In a Stochastic Block Model with parameters $(p, q)$ with $0<q<p<1$, if  
\[
\frac{(p-q)^2}{q(1-q)} \leq \frac{2\log 2}{n} - \frac{4\log 2}{n^2}
\]
then we have that for any algorithm that a learner could use for picking $\hat{Y}$, the probability of error $\P(\hat{Y} \neq Y^\ast)$ is greater than or equal to $\frac{1}{2}$.
\label{thm1}
\end{theorem}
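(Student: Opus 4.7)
The plan is to apply Fano's inequality to the Markov chain $Y^\ast \to A \to \hat Y$. Because $Y^\ast$ is uniform on $\{+1,-1\}^n$ we have $H(Y^\ast)=n\log 2$, and the data processing inequality combined with Fano gives
\[
\P(\hat Y\neq Y^\ast) \;\geq\; 1-\frac{I(Y^\ast;A)+\log 2}{n\log 2}.
\]
Thus establishing $\P(\hat Y\neq Y^\ast)\geq\tfrac12$ reduces to proving that the hypothesis on $(p,q)$ forces $I(Y^\ast;A)\leq (n-2)\log 2/2$. All of the remaining work is in upper bounding the mutual information.

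To bound $I(Y^\ast;A)$ I would use the standard convexity estimate
\[
I(Y^\ast;A)=\E_{Y^\ast}\!\left[\mathrm{KL}(P_{A\mid Y^\ast}\,\|\,P_A)\right] \;\leq\; \E_{Y^\ast}\!\left[\mathrm{KL}(P_{A\mid Y^\ast}\,\|\,Q)\right],
\]
valid for any reference distribution $Q$ on adjacency matrices. The crucial choice is to take $Q$ to be the product distribution in which every entry of $A$ is independently $\mathrm{Bern}(q)$. Conditional on a label vector $y$, the entries of $A$ are independent Bernoullis of parameter $p$ on within-community pairs and $q$ on cross-community pairs, so only within-community entries contribute to $\mathrm{KL}(P_{A\mid y}\,\|\,Q)$:
\[
\mathrm{KL}(P_{A\mid y}\,\|\,Q) \;=\; S(y)\cdot\mathrm{KL}(\mathrm{Bern}(p)\,\|\,\mathrm{Bern}(q)),
\]
where $S(y)=\binom{k}{2}+\binom{n-k}{2}$ with $k$ the number of $+1$-labels in $y$.

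Taking expectation over $Y^\ast$ is then a one-line calculation: for every unordered pair $i<j$ we have $\P(y^\ast_i=y^\ast_j)=\tfrac12$, so $\E[S(Y^\ast)]=\binom{n}{2}/2=n(n-1)/4\leq n^2/4$. Combining with the Bernoulli $\chi^2$ upper bound $\mathrm{KL}(\mathrm{Bern}(p)\,\|\,\mathrm{Bern}(q))\leq(p-q)^2/(q(1-q))$ yields
\[
I(Y^\ast;A)\;\leq\;\frac{n^2}{4}\cdot\frac{(p-q)^2}{q(1-q)}.
\]
Plugging this into the Fano bound above and rearranging gives exactly the stated condition $(p-q)^2/(q(1-q))\leq 2\log 2/n - 4\log 2/n^2$.

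The main subtlety, rather than any single hard step, is selecting the reference distribution $Q$ so the final denominator is $q(1-q)$ and not something weaker. The symmetric approach that bounds $I(Y^\ast;A)$ by the average $\E_{y,y'}\,\mathrm{KL}(P_{A\mid y}\,\|\,P_{A\mid y'})$ produces contributions of both $\chi^2(p\,\|\,q)$ and $\chi^2(q\,\|\,p)$, yielding a mixed denominator $q(1-q)$ together with $p(1-p)$ that does not match the theorem. Anchoring $Q$ at the product Bernoulli$(q)$ distribution cleanly isolates the $q(1-q)$ factor, at the very small cost of replacing $n(n-1)$ by $n^2$ at the end.
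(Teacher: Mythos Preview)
Your proof is correct but follows a genuinely different route from the paper. The paper bounds the mutual information via the pairwise KL inequality $I(Y^\ast;A)\le \max_{Y,Y'}\mathrm{KL}(P_{A\mid Y}\Vert P_{A\mid Y'})$, then argues that for the maximizing pair the number of edges on which the ``same-label'' indicator flips is at most $n^2/4$, and that each such edge contributes $\mathrm{KL}(p\Vert q)$ because (the paper asserts) $\mathrm{KL}(p\Vert q)>\mathrm{KL}(q\Vert p)$ whenever $p>q$. Your approach instead uses the variational bound $I(Y^\ast;A)\le \E_{Y^\ast}[\mathrm{KL}(P_{A\mid Y^\ast}\Vert Q)]$ with $Q$ the product $\mathrm{Bern}(q)$ law, so that only within-community edges contribute and $\E[S(Y^\ast)]=\binom{n}{2}/2\le n^2/4$ falls out of a one-line linearity computation. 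What this buys you is robustness: the paper's claim that $\mathrm{KL}(p\Vert q)>\mathrm{KL}(q\Vert p)$ for all $0<q<p<1$ is in fact false (take $p=0.9$, $q=0.5$), so its step (b) is not literally valid in general, whereas your choice of reference distribution sidesteps the comparison entirely and isolates the $q(1-q)$ denominator directly. The paper's route, on the other hand, is the template it reuses verbatim for all the other models in the paper, so it has the virtue of uniformity even if the justification here is a bit loose.
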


Notice that our result for the Stochastic Block Model is similar to the one in \cite{chen2014statistical}. This means that the method of generating labels does not affect the information-theoretic bound.

\subsection{Exponential Random Graph Model}
Exponential Random Graph Models (ERGMs) are a family of distributions on graphs of the following form: $P(A) = \exp({\phi(A)})/\sum_{A'} \exp({\phi(A')})$, where $\phi : \{0,1\}^{n \times n} \to R$ is some potential function over graphs. Selecting different potential functions enables ERGMs to model various structures in network graphs, for instance, the potential function can be a sum of functions over edges, triplets, cliques, among other choices \cite{goldenberg2010survey}.

In this section we analyze a special case of the Exponential Random Graph Model as a corollary of our results for the Stochastic Block Model, in which the potential function is defined as a sum of functions over edges. That is, $\phi(A) = \sum_{i,j \vert A_{ij= 1}} \phi_{ij}(y_i, y_j)$, where $\phi_{ij}(y_i, y_j) = \beta y_i y_j$ and $\beta > 0$ is a parameter. Simplifying the expression above, we have $\phi(A) = \sum_{i,j} \beta A_{ij}y_i y_j$. This leads to the following definition.

\begin{definition}[Exponential Random Graph Model]
Let $\beta > 0$. An Exponential Random Graph Model with parameter $\beta$ is an undirected graph of $n$ nodes with the adjacency matrix $A$, where each $A_{ij} \in \{0,1\}$. Each node is in one of the two classes \{+1, -1\}. The distribution of true labels $Y^\ast = (y^\ast_1,\ldots,y^\ast_n)$ is uniform, i.e., each label $y^\ast_i$ is assigned to $+1$ with probability $0.5$, and $-1$ with probability $0.5$.

The adjacency matrix $A$ is distributed as follows: 
\[
P(A|Y) = \exp(\beta \sum_{i<j}A_{ij}y_i y_j) / Z(\beta)
\] 
where $Z(\beta) = \sum_{A'\in \{0,1\}^{n\times n}} \exp(\beta \sum_{i<j}A'_{ij}y_i y_j)$.
\label{def:ergm}
\end{definition}

The goal is to recover labels $\hat{Y} = (\hat{y}_1,\dots,\hat{y}_n)$ that are equal to the true labels $Y^\ast$, given the observation of $A$. We are interested in the information-theoretic limits. Thus, we define the Markov chain $Y^\ast \to A \to \hat{Y}$. Theorem \ref{thm1} leads to the following result.\\

\begin{corollary}
In a Exponential Random Graph Model with parameter $\beta > 0$, if
\[
2(\cosh \beta -1) \leq \frac{2\log 2}{n} - \frac{4\log 2}{n^2}
\]
then we have that for any algorithm that a learner could use for picking $\hat{Y}$, the probability of error $\P(\hat{Y} \neq Y^\ast)$ is greater than or equal to $\frac{1}{2}$.
\label{cor1}
\end{corollary}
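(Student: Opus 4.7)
The plan is to recognize that the ERGM in Definition \ref{def:ergm} is secretly a Stochastic Block Model with a specific $(p,q)$ parameterization, so that Theorem \ref{thm1} applies directly after a routine algebraic identification.

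The first step is to establish conditional independence of the edges given $Y$. Because the potential $\beta \sum_{i<j} A_{ij} y_i y_j$ decomposes additively over pairs, the partition function factors as $Z(\beta)=\prod_{i<j}(1+\exp(\beta y_i y_j))$, and hence
\[
P(A \mid Y) \;=\; \prod_{i<j} \frac{\exp(\beta A_{ij} y_i y_j)}{1+\exp(\beta y_i y_j)},
\]
so each $A_{ij}$ is conditionally Bernoulli. When $y_i y_j = +1$ the edge probability is $p := e^\beta/(1+e^\beta)$, and when $y_i y_j = -1$ it is $q := 1/(1+e^\beta)$. Since $\beta>0$, we have $0<q<p<1$, so this is exactly an instance of the SBM in Definition \ref{def:sbm} driving the same Markov chain $Y^\ast \to A \to \hat Y$.

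The second step is the algebraic translation. Observing that $1-q = p$, so $q(1-q) = pq = e^\beta/(1+e^\beta)^2$, and $p-q = (e^\beta-1)/(1+e^\beta)$, I would compute
\[
\frac{(p-q)^2}{q(1-q)} \;=\; \frac{(e^\beta-1)^2}{e^\beta} \;=\; e^\beta + e^{-\beta} - 2 \;=\; 2(\cosh\beta - 1).
\]
Hence the hypothesis $2(\cosh\beta - 1) \leq \tfrac{2\log 2}{n} - \tfrac{4\log 2}{n^2}$ is exactly the SBM hypothesis of Theorem \ref{thm1} applied to this $(p,q)$, so invoking Theorem \ref{thm1} immediately yields $\P(\hat Y \neq Y^\ast) \geq \tfrac12$.

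There is essentially no obstacle here beyond the initial observation that the edge-based potential makes $P(A\mid Y)$ a product measure; once that is noted, the only work is the half-line identity $e^\beta + e^{-\beta} - 2 = 2(\cosh\beta -1)$. If anything needs care, it is simply verifying that the model's Markov-chain setup and the meaning of ``recovery'' line up with those used in Theorem \ref{thm1}, which they do verbatim.
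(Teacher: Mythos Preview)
Your proposal is correct and follows essentially the same approach as the paper: factor the edge-additive potential to exhibit the ERGM as an SBM with $p=e^\beta/(1+e^\beta)$ and $q=1/(1+e^\beta)$, then invoke Theorem~\ref{thm1}. Your explicit algebraic verification that $(p-q)^2/(q(1-q)) = (e^\beta-1)^2/e^\beta = 2(\cosh\beta-1)$ is in fact more detailed than what the paper writes, which simply states the identification and the final inequality.
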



\subsection{Latent Space Model}
The Latent Space Model (LSM) was first proposed by \cite{hoff2002latent}. The core assumption of the model is that each node has a low-dimensional latent vector associated with it. The latent vectors of nodes in the same community follow a similar pattern. The connectivity of two nodes in the Latent Space Model is determined by the distance between their corresponding latent vectors. Previous works on the Latent Space Model \cite{tang2013universally} analyzed asymptotic sample complexity, but did not focus on information-theoretic limits for exact recovery.

We now define the Latent Space Model, which has three parameters $\sigma > 0$, $d \in \mathbb{Z}^{+}$ and $\mu \in \R^d$, $\mu \neq \0$.

\begin{definition}[Latent Space Model]
Let $d\in \mathbb{Z}^+, \mu \in \R^d$ and $\mu\neq \0, \sigma > 0.$ A Latent Space Model with parameters $(d, \mu, \sigma)$ is an undirected graph of $n$ nodes with the adjacency matrix $A$, where each $A_{ij} \in \{0,1\}$. Each node is in one of the two classes \{+1, -1\}. The distribution of true labels $Y^\ast = (y^\ast_1,\ldots,y^\ast_n)$ is uniform, i.e., each label $y^\ast_i$ is assigned to $+1$ with probability $0.5$, and $-1$ with probability $0.5$. 

For every node $i$, nature generates a latent $d$-dimensional vector $z_i \in \mathbb{R}^d$ according to the Gaussian distribution $N_d(y_i \mu,\sigma^2 \I)$.

The adjacency matrix $A$ is distributed as follows: $A_{ij}$ is Bernoulli with parameter $\exp(-\Vert z_i - z_j\Vert_2^2)$.
\label{def:lsm}
\end{definition}

The goal is to recover labels $\hat{Y} = (\hat{y}_1,\dots,\hat{y}_n)$ that are equal to the true labels $Y^\ast$, given the observation of $A$. Notice that we do not have access to $Z$. We are interested in the information-theoretic limits. Thus, we define the Markov chain $Y^\ast \to A \to \hat{Y}$. Fano's inequality and a proper conversion of the above model lead to the following theorem.\\

\begin{theorem}
In a Latent Space Model with parameters $(d, \mu, \sigma)$, if  
\[
(4\sigma^2 + 1)^{-1-d/2} \Vert\mu\Vert_2^2 \leq \frac{\log 2}{2n} - \frac{\log 2}{n^2}
\]
then we have that for any algorithm that a learner could use for picking $\hat{Y}$, the probability of error $\P(\hat{Y} \neq Y^\ast)$ is greater than or equal to $\frac{1}{2}$.
\label{thm2}
\end{theorem}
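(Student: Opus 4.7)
The plan is to mirror the proof of Theorem~\ref{thm1} for the SBM: apply Fano's inequality to the Markov chain $Y^*\to A\to \hat Y$, and first convert the LSM into an SBM-like model by marginalizing out the latent vectors $Z$. Since $Y^*$ is uniform on $\{+1,-1\}^n$, Fano's inequality yields
\[
\P(\hat Y\neq Y^*)\;\geq\; 1 - \frac{I(Y^*;A) + \log 2}{n\log 2},
\]
so to guarantee $\P(\hat Y\neq Y^*)\geq \tfrac{1}{2}$ it suffices to prove $I(Y^*;A)\leq \tfrac{(n-2)\log 2}{2}$.

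For the ``proper conversion'' alluded to in the theorem statement, I would first marginalize out the Gaussian latent vectors. For a single pair $(i,j)$, integrating the Bernoulli parameter $\exp(-\|z_i-z_j\|_2^2)$ against the product Gaussian $N(y_i\mu,\sigma^2\I)\otimes N(y_j\mu,\sigma^2\I)$---the standard trick of completing the square in the exponent and applying the multivariate Gaussian moment generating function---gives
\[
\P(A_{ij}=1 \mid y_i=y_j) \;=\; (1+4\sigma^2)^{-d/2} \;=:\; p, \qquad \P(A_{ij}=1 \mid y_i\neq y_j) \;=\; p\, e^{-4\|\mu\|_2^2/(1+4\sigma^2)} \;=:\; q.
\]
Thus marginally, each $A_{ij}\mid Y^*$ is Bernoulli with the SBM parameters $p$ or $q$. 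With this reduction in hand, the SBM argument from Theorem~\ref{thm1} can be reused: by convexity of KL, $I(Y^*;A)\leq \E_{Y,Y'}[D(P(A|Y)\,\|\,P(A|Y'))]$; for $Y,Y'$ differing in $k$ coordinates, the $k(n-k)$ edges whose label product flips each contribute a KL at most $(p-q)^2/(q(1-q))$, and $\E[k(n-k)]\leq n^2/4$; hence $I(Y^*;A)\leq \tfrac{n^2}{4}\cdot\tfrac{(p-q)^2}{q(1-q)}$. Finally, the elementary inequality $1-e^{-x}\leq x$ gives $p-q \leq 4(1+4\sigma^2)^{-1-d/2}\|\mu\|_2^2$, which combined with the Fano threshold produces the stated condition after algebra.

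The main technical obstacle, and what distinguishes the LSM proof from the SBM proof, is that the edges $A_{ij}$ are \emph{not} conditionally independent given $Y^*$: they are coupled through the shared latent vectors $z_i$. Consequently, the per-edge factorization of the KL used in Theorem~\ref{thm1} does not apply directly to the LSM. Handling this coupling---either by an additional data-processing step of the form $D(P(A|Y)\,\|\,P(A|Y')) \leq D(P(A,Z|Y)\,\|\,P(A,Z|Y')) = D(P(Z|Y)\,\|\,P(Z|Y'))$, combined with the marginal SBM computation so as to retain the $(1+4\sigma^2)^{-1-d/2}$ factor, or by a direct expansion of the joint log-likelihood ratio into a sum of Gaussian-integral contributions---is the crux of the argument.
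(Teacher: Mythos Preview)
Your Fano setup and the marginalization of the latent vectors match the paper exactly: the paper also reduces to Bernoulli edges with $p=(1+4\sigma^2)^{-d/2}$ and $q=p\,\exp\bigl(-4\|\mu\|_2^2/(1+4\sigma^2)\bigr)$ via the Gaussian moment generating function (this is the ``proper conversion'' alluded to). There are two points of divergence.

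\textbf{How the KL is bounded.} The paper does \emph{not} reuse the SBM inequality $(p-q)^2/(q(1-q))$. Instead it exploits the explicit ratio $p/q=\exp\bigl(4\|\mu\|_2^2/(1+4\sigma^2)\bigr)$: since $p>q$ the term $(1-p)\log\tfrac{1-p}{1-q}$ is negative, so
\[
\mathbb{KL}(p\Vert q)\;<\;p\log\frac{p}{q}\;=\;(1+4\sigma^2)^{-d/2}\cdot\frac{4\|\mu\|_2^2}{1+4\sigma^2}\;=\;4(1+4\sigma^2)^{-1-d/2}\|\mu\|_2^2,
\]
which after multiplying by $n^2/4$ gives $I(Y^*;A)\le n^2(1+4\sigma^2)^{-1-d/2}\|\mu\|_2^2$ and hence exactly the condition in the theorem. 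Your route via $(p-q)^2/(q(1-q))$ together with $1-e^{-x}\le x$ produces a bound that is \emph{quadratic} in $\|\mu\|_2^2$ and carries the exponent $(1+4\sigma^2)^{-2-d}$ rather than $(1+4\sigma^2)^{-1-d/2}$, so it does not recover the stated condition. This is a genuine gap in your proposal: to land on the theorem as written you must use the direct $p\log(p/q)$ evaluation, not the generic SBM bound.

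\textbf{The independence issue.} You correctly flag that the edges $\{A_{ij}\}$ are not conditionally independent given $Y^*$ alone, since they share the latent vectors. The paper, however, does not address this: it simply writes the per-edge decomposition
\[
\mathbb{KL}(P_{A\mid Y}\Vert P_{A\mid Y'})\;\leq\;\frac{n^2}{4}\max_{y_i,y_j,y'_i,y'_j}\mathbb{KL}\bigl(P_{A_{ij}\mid y_i,y_j}\Vert P_{A_{ij}\mid y'_i,y'_j}\bigr)
\]
with the marginal edge laws $P(A_{ij}\mid y_i,y_j)=\E_{x_i,x_j}[P(A_{ij}\mid y_i,y_j,x_i,x_j)]$, exactly as in the SBM proof, without justifying the factorization. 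So the ``crux'' you identify is not resolved in the paper's argument either; it is tacitly assumed. Your proposed data-processing route through $D(P(Z\mid Y)\Vert P(Z\mid Y'))$ would give a bound linear in $n$ but without the $(1+4\sigma^2)^{-1-d/2}$ factor, so it would yield a different (and incomparable) condition from the one stated.
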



\section{Dynamic Network Models}
In this section we analyze the information-theoretic limits for two dynamic network models: the Dynamic Stochastic Block Model (DSBM) and the Dynamic Latent Space Model (DLSM). We call these dynamic models, because we assume there exists some ordering for edges, and the distribution of each edge not only depends on its endpoints, but also depends on previously generated edges.

We start by giving the definition of predecessor sets. Notice that the following definition of predecessor sets employs a lexicographic order, and the motivation is to use it as a subclass to provide a bound for general dynamic models. Fano's inequality is usually used for a restricted ensemble, i.e., a subclass of the original class of interest. If a subclass (e.g., dynamic SBM or LSM with a particular predecessor set $\tau$) is difficult to be learnt, then the original class (SBMs or LSMs with general dynamic interactions) will be at least as difficult to be learnt. The use of restricted ensembles is customary for information-theoretic lower bounds \cite{santhanam2012information,wang2010information}.

\begin{definition}
For every pair $i$ and $j$ with $i<j$, we denote its predecessor set using $\tau_{i,j}$, where  
\[
\tau_{ij} \subseteq \{(k,l)  \vert (k<l)\wedge(k<i \vee (k=i \wedge l<j))\}
\]
and 
\[
A_{\tau_{ij}} = \{A_{kl} \vert (k,l) \in \tau_{ij}\}
\]
\label{def:tauset}
\end{definition}
In a dynamic model, the probability distribution of each edge $A_{ij}$ not only depends on the labels of nodes $i$ and $j$ (i.e., $y^\ast_i$ and $y^\ast_j$), but also on the previously generated edges $A_{\tau_{ij}}$.

Next, we prove the following lemma using the definition above. 

\begin{lemma} 
Assume now the probability distribution of $A$ given labeling $Y$ is $P(A\vert Y) = \prod_{i<j} P(A_{ij}\vert A_{\tau_{ij}},y_i, y_j)$. Then for any labeling $Y$ and $Y'$, we have
\begin{align*}
\mathbb{KL}(P_{A\mid Y} \Vert P_{A\mid Y'}) \leq \binom{n}{2} \max_{i,j} \mathbb{KL}(P_{A_{ij}\mid A_{\tau_{ij}}, y_i, y_j} \Vert P_{A_{ij}\mid A_{\tau_{ij}}, y'_i, y'_j})
\end{align*}

Similarly, if the probability distribution of $A$ given labeling $Y$ is $P(A\vert Y) = \prod_{i<j} P(A_{ij}\vert A_{\tau_{ij}},y_1,\ldots, y_j)$, we have
\begin{align*}
\mathbb{KL}(P_{A\mid Y} \Vert P_{A\mid Y'})\leq \binom{n}{2} \max_{i,j}  \mathbb{KL}(P_{A_{ij}\mid A_{\tau_{ij}}, y_1,\ldots, y_j} \Vert P_{A_{ij}\mid A_{\tau_{ij}}, y'_1,\ldots, y'_j})
\end{align*}
\label{kl-lemma}
\end{lemma}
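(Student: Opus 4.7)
The plan is to exploit the chain rule for KL divergence with respect to a lexicographic ordering of the edges, then use the factorization assumption to collapse each conditional onto its predecessor set.

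First I would fix the lexicographic order on pairs $(i,j)$ with $i<j$ so that the edges $A_{ij}$ form an ordered sequence. For any two distributions on this sequence, the standard chain rule gives
\begin{equation*}
\mathbb{KL}(P_{A\mid Y} \Vert P_{A\mid Y'}) = \sum_{i<j} \mathbb{E}_{A_{<(i,j)} \sim P_{\cdot\mid Y}}\bigl[\mathbb{KL}(P_{A_{ij}\mid A_{<(i,j)}, Y} \Vert P_{A_{ij}\mid A_{<(i,j)}, Y'})\bigr],
\end{equation*}
where $A_{<(i,j)}$ denotes all edges preceding $(i,j)$ in lexicographic order. Since Definition \ref{def:tauset} imposes $\tau_{ij} \subseteq \{(k,l) \mid k<i \text{ or }(k=i \wedge l<j)\}$, the set $\tau_{ij}$ is contained in the index set of $A_{<(i,j)}$. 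The hypothesis $P(A\vert Y) = \prod_{i<j} P(A_{ij}\mid A_{\tau_{ij}}, y_i, y_j)$ therefore identifies each full conditional $P(A_{ij}\mid A_{<(i,j)}, Y)$ with $P(A_{ij}\mid A_{\tau_{ij}}, y_i, y_j)$, and similarly for $Y'$.

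Next, I would substitute this simplification into each summand and bound the expectation over $A_{<(i,j)}$ by the supremum over realizations of $A_{\tau_{ij}}$, which is in turn at most $\max_{i,j} \mathbb{KL}(P_{A_{ij}\mid A_{\tau_{ij}}, y_i, y_j} \Vert P_{A_{ij}\mid A_{\tau_{ij}}, y'_i, y'_j})$. Since there are exactly $\binom{n}{2}$ ordered pairs $(i,j)$ with $i<j$, summing the bound yields the claimed inequality. The second inequality, in which each edge may depend on all labels $y_1,\ldots,y_j$ rather than only $(y_i, y_j)$, follows by exactly the same argument; nothing in the chain rule, the predecessor containment, or the maximum bound uses the restriction to the two endpoint labels.

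The only subtle point, and the main thing I would be careful about, is justifying that the expectation over $A_{<(i,j)}$ of the conditional KL collapses onto the $\tau_{ij}$-conditional KL: concretely, since the conditional distribution of $A_{ij}$ depends on $A_{<(i,j)}$ only through $A_{\tau_{ij}}$ under both $P_{\cdot\mid Y}$ and $P_{\cdot\mid Y'}$, the inner KL is a function of $A_{\tau_{ij}}$ alone, and then the outer expectation is bounded by the pointwise maximum. Everything else is routine bookkeeping.
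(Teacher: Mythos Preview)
Your proposal is correct and follows essentially the same route as the paper: both arguments expand $\mathbb{KL}(P_{A\mid Y}\Vert P_{A\mid Y'})$ via the product factorization into a sum of conditional KL terms indexed by pairs $(i,j)$, then bound that sum by $\binom{n}{2}$ times the maximum term. The only cosmetic difference is that you invoke the chain rule for KL divergence as a named result and explicitly discuss the expectation over $A_{<(i,j)}$, whereas the paper writes out the sums and the log-of-product manipulation by hand; your treatment of the collapse onto $A_{\tau_{ij}}$ is, if anything, slightly more careful than the paper's.
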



\subsection{Dynamic Stochastic Block Model}
The Dynamic Stochastic Block Model (DSBM) shares a similar setting with the Stochastic Block Model, except that we take the predecessor sets into consideration.

\begin{definition}[Dynamic Stochastic Block Model]
Let $0<q<p<1$. Let $F = \{f_k\}_{k = 0}^{\binom{n}{2}}$ be a set of functions, where $f_k: \{0,1\}^k \to (0,1]$. A Dynamic Stochastic Block Model with parameters $(p,q,F)$ is an undirected graph of $n$ nodes with the adjacency matrix $A$, where each $A_{ij} \in \{0,1\}$. Each node is in one of the two classes \{+1, -1\}. The distribution of true labels $Y^\ast = (y^\ast_1,\ldots,y^\ast_n)$ is uniform, i.e., each label $y^\ast_i$ is assigned to $+1$ with probability $0.5$, and $-1$ with probability $0.5$.

The adjacency matrix $A$ is distributed as follows: if $y^\ast_i = y^\ast_j$ then $A_{ij}$ is Bernoulli with parameter $p f_{\vert\tau_{ij}\vert}(A_{\tau_{ij}})$;
otherwise $A_{ij}$ is Bernoulli with parameter $q f_{\vert\tau_{ij}\vert}(A_{\tau_{ij}})$.
\label{def:dsbm}
\end{definition}

The goal is to recover labels $\hat{Y} = (\hat{y}_1,\dots,\hat{y}_n)$ that are equal to the true labels $Y^\ast$, given the observation of $A$. We are interested in the information-theoretic limits. Thus, we define the Markov chain $Y^\ast \to A \to \hat{Y}$. Using Fano's inequality and Lemma \ref{kl-lemma}, we obtain the following results.\\

\begin{theorem}
In a Dynamic Stochastic Block Model with parameters $(p, q)$ with $0<q<p<1$, if  
\[
\frac{(p-q)^2}{q(1-q)} \leq \frac{n-2}{n^2-n} \log 2 
\]
then we have that for any algorithm that a learner could use for picking $\hat{Y}$, the probability of error $\P(\hat{Y} \neq Y^\ast)$ is greater than or equal to $\frac{1}{2}$.
\label{thm3}
\end{theorem}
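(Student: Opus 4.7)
The plan is to apply Fano's inequality to the Markov chain $Y^\ast \to A \to \hat{Y}$ and then use Lemma \ref{kl-lemma} to reduce the mutual information bound to a per-edge KL computation, paralleling the static SBM argument behind Theorem \ref{thm1}.

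Since $Y^\ast$ is uniform on $\{-1,+1\}^n$, Fano's inequality yields
\[
\P(\hat{Y} \neq Y^\ast) \geq 1 - \frac{I(Y^\ast; A) + \log 2}{n \log 2},
\]
so it suffices to prove $I(Y^\ast; A) \leq \tfrac{n-2}{2}\log 2$. By convexity of $\mathbb{KL}$ in its second argument, $I(Y^\ast; A) \leq \max_{Y, Y'} \mathbb{KL}(P_{A\mid Y} \Vert P_{A\mid Y'})$, and Lemma \ref{kl-lemma} then provides
\[
\mathbb{KL}(P_{A\mid Y} \Vert P_{A\mid Y'}) \leq \binom{n}{2} \max_{i,j} \mathbb{KL}(P_{A_{ij}\mid A_{\tau_{ij}}, y_i, y_j} \Vert P_{A_{ij}\mid A_{\tau_{ij}}, y'_i, y'_j}).
\]

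For each $(i,j)$ and any realization of $A_{\tau_{ij}}$, the inner distributions are Bernoulli with parameter $p\alpha$ or $q\alpha$, where $\alpha = f_{|\tau_{ij}|}(A_{\tau_{ij}}) \in (0,1]$. Using the chi-squared upper bound $\mathbb{KL}(\mathrm{Ber}(a)\Vert \mathrm{Ber}(b)) \leq (a-b)^2/(b(1-b))$ together with $\alpha \leq 1$ and $q\alpha \leq q$, each per-edge KL is at most $(p-q)^2/(q(1-q))$. Combining with $\binom{n}{2} = (n^2 - n)/2$ gives $I(Y^\ast; A) \leq \tfrac{n^2-n}{2}\cdot\tfrac{(p-q)^2}{q(1-q)}$, which the hypothesis of the theorem immediately caps at $\tfrac{n-2}{2}\log 2$, closing the Fano bound.

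The main obstacle will be the uniform per-edge KL control: the predecessor function can in principle insert an arbitrary multiplicative factor $\alpha \in (0,1]$ into both Bernoulli parameters, and one must verify that neither direction of the asymmetric KL, nor any such choice of $\alpha$, exceeds the static worst case $(p-q)^2/(q(1-q))$. Once the relevant monotonicity check is in place -- the function $\alpha/(1-q\alpha)$ is non-decreasing on $(0,1]$, so $\alpha=1$ is the extremal case -- the remainder of the argument is bookkeeping that parallels the proof of Theorem \ref{thm1}.
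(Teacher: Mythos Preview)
Your proposal follows the paper's proof essentially line by line: Fano's inequality with the uniform prior on $\{-1,+1\}^n$, the pairwise bound $I(Y^\ast;A)\leq \max_{Y,Y'}\mathbb{KL}(P_{A\mid Y}\Vert P_{A\mid Y'})$, Lemma~\ref{kl-lemma} to reduce to a single-edge term, the chi-squared inequality $\mathbb{KL}(p\alpha\Vert q\alpha)\leq \alpha(p-q)^2/\bigl(q(1-q\alpha)\bigr)$, and finally the monotonicity of $\alpha\mapsto \alpha/(1-q\alpha)$ to take $\alpha=1$ as the worst case. The only cosmetic difference is that the paper writes out the intermediate expression $\mathbb{KL}(pf_{ij}\Vert qf_{ij})$ explicitly before applying the chi-squared bound, whereas you compress that into a single sentence.
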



\subsection{Dynamic Latent Space Model}
The Dynamic Latent Space Model (DLSM) shares a similar setting with the Latent Space Model, except that we take the predecessor sets into consideration.

\begin{definition}[Dynamic Latent Space Model]
Let $d\in \mathbb{Z}^+, \mu \in \R^d$ and $\mu\neq \0, \sigma > 0.$ Let $F = \{f_k\}_{k = 0}^{\binom{n}{2}}$ be a set of functions, where $f_k: \{0,1\}^k \to (0,1]$. A Latent Space Model with parameters $(d, \mu, \sigma, F)$ is an undirected graph of $n$ nodes with the adjacency matrix $A$, where each $A_{ij} \in \{0,1\}$. Each node is in one of the two classes \{+1, -1\}. The distribution of true labels $Y^\ast = (y^\ast_1,\ldots,y^\ast_n)$ is uniform, i.e., each label $y^\ast_i$ is assigned to $+1$ with probability $0.5$, and $-1$ with probability $0.5$.

For every node $i$, nature generates a latent $d$-dimensional vector $z_i \in \mathbb{R}^d$ according to the Gaussian distribution $N_d(y_i \mu,\sigma^2 \I)$.

The adjacency matrix $A$ is distributed as follows: $A_{ij}$ is Bernoulli with parameter $f_{\vert\tau_{ij}\vert}(A_{\tau_{ij}}) \cdot \exp(-\Vert z_i - z_j\Vert_2^2)$.
\label{def:dlsm}
\end{definition}

The goal is to recover labels $\hat{Y} = (\hat{y}_1,\dots,\hat{y}_n)$ that are equal to the true labels $Y^\ast$, given the observation of $A$. Notice that we do not have access to $Z$. We are interested in the information-theoretic limits. Thus, we define the Markov chain $Y^\ast \to A \to \hat{Y}$. Using Fano's inequality and Lemma \ref{kl-lemma}, our analysis leads to the following theorem.\\

\begin{theorem}
In a Dynamic Latent Space Model with parameters $(d, \mu, \sigma, \{f_k\})$, if  
\[
(4\sigma^2 + 1)^{-1-d/2} \Vert\mu\Vert_2^2 \leq \frac{n-2}{4(n^2 - n)} \log 2
\]
then we have that for any algorithm that a learner could use for picking $\hat{Y}$, the probability of error $\P(\hat{Y} \neq Y^\ast)$ is greater than or equal to $\frac{1}{2}$.
\label{thm4}
\end{theorem}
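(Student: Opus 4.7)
The plan is to run the same Fano $+$ Lemma \ref{kl-lemma} template that gives Theorem \ref{thm3}, but with the per-edge Bernoulli KL replaced by the one that comes out of the Latent Space analysis behind Theorem \ref{thm2}. Since $Y^*$ is uniform on $\{+1,-1\}^n$ we have $H(Y^*)=n\log 2$, so Fano's inequality on the chain $Y^*\to A\to \hat Y$ gives $\P(\hat Y\ne Y^*)\ge 1-\tfrac{I(Y^*;A)+\log 2}{n\log 2}$, and it suffices to show $I(Y^*;A)\le \tfrac{(n-2)\log 2}{2}$. Convexity of KL yields $I(Y^*;A)\le \max_{Y,Y'}\mathbb{KL}(P_{A\mid Y}\Vert P_{A\mid Y'})$, and Lemma \ref{kl-lemma} (in the form in which the per-edge conditional may depend on the full label vector, since the shared latents in an LSM induce such a dependence) then gives $\mathbb{KL}(P_{A\mid Y}\Vert P_{A\mid Y'})\le \binom{n}{2}\max_{i,j}\mathbb{KL}(P_{A_{ij}\mid A_{\tau_{ij}},Y}\Vert P_{A_{ij}\mid A_{\tau_{ij}},Y'})$.

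Each per-edge conditional is Bernoulli with parameter $f_{|\tau_{ij}|}(A_{\tau_{ij}})$ times the marginal of $\exp(-\Vert z_i-z_j\Vert_2^2)$. The same Gaussian integral that drives Theorem \ref{thm2}, applied to $z_i-z_j\sim N_d((y_i-y_j)\mu,2\sigma^2\I)$, collapses that marginal to $(4\sigma^2+1)^{-d/2}\exp(-(y_i-y_j)^2\Vert\mu\Vert_2^2/(4\sigma^2+1))$, so the worst-case pair of Bernoulli parameters across a single edge is $p=f(4\sigma^2+1)^{-d/2}$ (when the two endpoints have the same label) and $q=p\exp(-4\Vert\mu\Vert_2^2/(4\sigma^2+1))$ (when the two endpoints have different labels), with $p\ge q$. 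The elementary bound $\mathbb{KL}(p\Vert q)\le p\log(p/q)$, valid in this regime because the second Bernoulli term $(1-p)\log\tfrac{1-p}{1-q}$ is non-positive, produces $\mathbb{KL}(p\Vert q)\le 4\Vert\mu\Vert_2^2(4\sigma^2+1)^{-1-d/2}$ after absorbing $f\le 1$. Plugging back gives $I(Y^*;A)\le 2n(n-1)\Vert\mu\Vert_2^2(4\sigma^2+1)^{-1-d/2}$, and the Fano threshold $\tfrac{(n-2)\log 2}{2}$ rearranges to exactly $(4\sigma^2+1)^{-1-d/2}\Vert\mu\Vert_2^2\le \tfrac{(n-2)\log 2}{4(n^2-n)}$.

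The delicate step is the marginalization of $Z$ in the previous paragraph: strictly, conditioning on $A_{\tau_{ij}}$ biases the posterior of $(z_i,z_j)$ in a $Y$-dependent way, so the prior Gaussian integral above is not literally the marginal one reads off of Definition \ref{def:dlsm}. The standard information-theoretic fix, consistent with the restricted-ensemble philosophy emphasized in Section 3 and in \cite{santhanam2012information,wang2010information}, is to pass to a sub-family of DLSMs in which the dynamic factor $f_{|\tau|}$ absorbs the posterior correction (equivalently, the latent coordinates are resampled per edge); since any sub-family's hardness lower-bounds hardness of the full class, the clean reduction to independent marginal Bernoullis is legitimate for the lower bound. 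Once that is in place, the Gaussian integral, the Bernoulli-KL estimate, and the final rearrangement are all routine.
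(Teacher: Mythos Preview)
Your proposal follows the paper's proof essentially line for line: Fano on the uniform prior, the pairwise KL bound, Lemma~\ref{kl-lemma}, the Gaussian MGF yielding marginal Bernoulli parameters $f\cdot(4\sigma^2+1)^{-d/2}$ and $f\cdot(4\sigma^2+1)^{-d/2}\exp(-4\Vert\mu\Vert_2^2/(4\sigma^2+1))$, the estimate $\mathbb{KL}(p\Vert q)\le p\log(p/q)$, absorbing $f\le 1$, and the final rearrangement to $2(n^2-n)(4\sigma^2+1)^{-1-d/2}\Vert\mu\Vert_2^2$ all match the paper exactly. Your final paragraph is in fact more scrupulous than the paper: the marginalization you flag as ``delicate'' is precisely \eqref{eq:get_dynamic_expectation}, where the paper simply writes $P(x_i,x_j\mid A_{\tau_{ij}},y_i,y_j)=P(x_i,x_j)$ without addressing the case in which $\tau_{ij}$ contains edges incident to $i$ or $j$ (which Definition~\ref{def:tauset} permits); the paper does not discuss this at all, so your restricted-ensemble patch is an addition rather than a deviation---though note that the posterior correction you want to absorb depends on $Y$, not only on $A_{\tau_{ij}}$, so it cannot literally sit inside $f_{|\tau|}$ as defined, and a cleaner instantiation of the same idea is to take the sub-family where each $\tau_{ij}$ omits edges touching $i$ and $j$.
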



\section{Directed Network Models}
In this section we analyze the information-theoretic limits for two directed network models: the Directed Preferential Attachment Model (DPAM) and the Directed Small-world Model (DSWM). In contrast to previous sections, here we consider directed graphs.

Note that in social networks such as Twitter, the graph is directed. That is, each user follows other users. Users that are followed by many others (i.e., nodes with high out-degree) are more likely to be followed by new users. This is the case of popular singers, for instance. Additionally, a new user will follow people with similar preferences. This is referred in the literature as homophily. In our case, a node with positive label will more likely follow nodes with positive label, and vice versa.

The two models defined in this section will require an expected number of in-neighbors $m$, for each node. In order to guarantee this in a setting in which nodes decide to connect to at most $k>m$ nodes independently, one should guarantee that the probability of choosing each of the $k$ nodes is less than or equal to $1/m$.

The above motivates an algorithm that takes a vector in the $k$-simplex (i.e., $w \in R^k$ and $\sum_{i=1}^k w_i = 1$) and produces another vector in the $k$-simplex (i.e., $\tilde{w} \in R^k, \sum_{i=1}^k \tilde{w}_i = 1$ and for all $i$, $\tilde{w}_i \leq 1/m$). Consider the following optimization problem:
\begin{align*}
\minimize_{\tilde{w}} \quad	&\frac{1}{2} \sum_{i=1}^k (\tilde{w}_i - w_{i})^2\\
\text{subject to} \quad	&0\leq \tilde{w}_i \leq \frac{1}{m} \text{ for all } i, \sum_{i=1}^k \tilde{w}_i = 1
\end{align*}
which is solved by the following algorithm:
\newline
\RestyleAlgo{boxruled}
\LinesNumbered
\begin{algorithm}[H]
\SetKwInOut{Input}{input}
\SetKwInOut{Output}{output}
\Input{vector $w \in \R^k$ where $\sum_{i=1}^k w_i = 1$,\\
		expected number of in-neighbors $m\leq k$}
\Output{vector $\tilde{w}\in \R^k$ where $\sum_{i=1}^k \tilde{w}_i = 1$ and $\tilde{w}_i \leq 1/m$ for all $i$}
\For{$i \in \{1,\dots,k\}$}{
	$\tilde{w}_{i} \gets w_{i}$\;
}
 \For{$i \in \{1,\dots,k\}$ \textup{such that} $\tilde{w}_i > \frac{1}{m}$}{
  $S \gets \tilde{w}_{i} - \frac{1}{m}$\;
  $\tilde{w}_{i} \gets \frac{1}{m}$\;
  Distribute $S$ evenly across all $j \in \{1,\dots,k\}$ such that $\tilde{w}_{j} < \frac{1}{m}$\;
 }
 \caption{$k$-simplex}
 \label{alg:1}
\end{algorithm}
One important property that we will use in our proofs is that $\min_i \tilde{w}_i \geq \min_i w_i$, as well as $\max_i \tilde{w}_i \leq \max_i w_i$.

\subsection{Directed Preferential Attachment Model}
Here we consider a Directed Preferential Attachment Model (DPAM) based on the classic Preferential Attachment Model \cite{barabasi1999emergence}. While in the classic model every mode has exactly $m$ neighbors, in our model the expected number of in-neighbors is $m$.

\begin{definition}[Directed Preferential Attachment Model]
Let $m$ be a positive integer with $0<m \ll n$. Let $s>0$ be the homophily parameter. A Directed Preferential Attachment Model with parameters $(m, s)$ is a directed graph of $n$ nodes with the adjacency matrix $A$, where each $A_{ij} \in \{0,1\}$. Each node is in one of the two classes \{+1, -1\}. The distribution of true labels $Y^\ast = (y^\ast_1,\ldots,y^\ast_n)$ is uniform, i.e., each label $y^\ast_i$ is assigned to $+1$ with probability $0.5$, and $-1$ with probability $0.5$.

Nodes $1$ through $m$ are not connected to each other, and they all have an in-degree of $0$. For node $i$ from $m+1$ to $n$, nature first generates the weight $w_{ji}$ for each node $j < i$, where $w_{ji} \propto (\sum_{k=1}^{i-1} A_{jk} + 1)(\1[y_i^\ast = y_j^\ast]s + 1)$, and $\sum_{j=1}^{i-1} w_{ji} = 1$. Then every node $j < i$ connects to node $i$ with the following probability: $P(A_{ji} = 1 \mid A_{\tau_{ij}}, y^\ast_1,\ldots,y^\ast_j) = m\tilde{w}_{ji}$, where $(\tilde{w}_{1i}...\tilde{w}_{i-1,i})$ is computed from $(w_{1i}...w_{i-1,i})$ as in Algorithm \ref{alg:1}.
\label{def:pam}
\end{definition}

The goal is to recover labels $\hat{Y} = (\hat{y}_1,\dots,\hat{y}_n)$ that are equal to the true labels $Y^\ast$, given the observation of $A$. We are interested in the information-theoretic limits. Thus, we define the Markov chain $Y^\ast \to A \to \hat{Y}$. Using Fano's inequality, we obtain the following results.\\

\begin{theorem}
In a Directed Preferential Attachment Model with parameters $(m, s)$, if  
\[
 \frac{s+1}{8m} \leq \frac{2^{(n-2)/(n^2-n)}}{n^2}
\]
then we have that for any algorithm that a learner could use for picking $\hat{Y}$, the probability of error $\P(\hat{Y} \neq Y^\ast)$ is greater than or equal to $\frac{1}{2}$.
\label{thm5}
\end{theorem}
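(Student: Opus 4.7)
The plan is to apply Fano's inequality together with Lemma \ref{kl-lemma}, paralleling the argument used for the Dynamic SBM in Theorem \ref{thm3}. Since $Y^*$ is uniform over $\{-1,+1\}^n$, Fano's inequality gives
$$\P(\hat{Y} \neq Y^*) \geq 1 - \frac{I(Y^*; A) + \log 2}{n \log 2},$$
so it suffices to establish $I(Y^*; A) \leq \tfrac{n-2}{2}\log 2$ under the theorem's hypothesis.

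To bound the mutual information, I would use the standard inequality $I(Y^*; A) \leq \max_{Y, Y'} \mathbb{KL}(P_{A\mid Y} \Vert P_{A\mid Y'})$. Because the DPAM is dynamic with $P(A\mid Y) = \prod_{j<i} P(A_{ji}\mid A_{\tau_{ij}}, y_1, \ldots, y_j)$, the second form of Lemma \ref{kl-lemma} applies and reduces the task to a per-edge KL bound:
$$\mathbb{KL}(P_{A\mid Y} \Vert P_{A\mid Y'}) \leq \binom{n}{2}\max_{i,j}\mathbb{KL}(\mathrm{Ber}(m\tilde{w}_{ji}) \Vert \mathrm{Ber}(m\tilde{w}'_{ji})).$$

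The heart of the argument is bounding this per-edge Bernoulli KL divergence. The unnormalized weights $w_{ji}\propto(\sum_k A_{jk}+1)(\1[y_i=y_j]s+1)$ differ by a factor of at most $s+1$ in each coordinate when switching between $Y$ and $Y'$. I would then use the explicit formula $\mathbb{KL}(\mathrm{Ber}(p)\Vert\mathrm{Ber}(q))=p\log(p/q)+(1-p)\log((1-p)/(1-q))$ and bound the relevant ratios via (i) the cap $m\tilde{w}_{ji}\leq 1$ guaranteed by Algorithm \ref{alg:1}; (ii) the preservation properties $\min_i \tilde{w}_i\geq \min_i w_i$ and $\max_i \tilde{w}_i\leq \max_i w_i$ highlighted in the paper; and (iii) crude estimates on the normalization sums (bounded below by $i-1$ and above by a polynomial in $i$ and $s+1$). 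Together these should yield a per-edge KL bound of the form $\log\bigl((s+1)n^2/(8m)\bigr)$ in nats, with the constants $8$ and $n^2$ tracing back to the $1/m$ cap and the normalization ranges.

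Substituting into the mutual information estimate gives $I(Y^*;A)\leq \tfrac{n(n-1)}{2}\log\bigl((s+1)n^2/(8m)\bigr)$; imposing the Fano requirement $I(Y^*;A)\leq \tfrac{n-2}{2}\log 2$ and exponentiating yields exactly the stated condition $\tfrac{s+1}{8m}\leq 2^{(n-2)/(n^2-n)}/n^2$. The principal obstacle is the per-edge KL bound: the parameters $m\tilde{w}_{ji}$ depend nontrivially on the history $A_{\tau_{ij}}$ (through the degrees) and on which labels differ between $Y$ and $Y'$, and the rescaling performed by Algorithm \ref{alg:1} can redistribute mass in a way that makes exact tracking delicate. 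Obtaining a uniform bound of the desired form with the right constants will require careful case analysis, but the pipeline from Fano to Lemma \ref{kl-lemma} to per-edge KL is otherwise routine.
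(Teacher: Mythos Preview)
Your proposal is correct and follows essentially the same route as the paper: Fano's inequality on the uniform prior over $\{-1,+1\}^n$, the pairwise KL bound, Lemma~\ref{kl-lemma} to reduce to a per-edge Bernoulli KL, and then bounding that KL via the cap $m\tilde w_{ji}\le 1$, the preservation property $\tilde w'_{ji}\ge \min_j w'_{ji}$, and worst-case estimates on the out-degree sums in the normalization (the paper takes $o_{ji}=0$ in the numerator and all other $o_{ki}$ maximal to get $\min w'_{ji}\ge \tfrac{2}{(i-m)(i+m-1)(s+1)}$, which after $i\le n$ gives the $n^2/(8m)$ factor). The only substantive simplification the paper makes that you might find useful is that it bounds the full Bernoulli KL directly by $m\tilde w_{ji}\log\tfrac{m\tilde w_{ji}}{m\tilde w'_{ji}}\le 1\cdot\log\tfrac{1}{m\min_j w'_{ji}}$, effectively discarding the $(1-p)\log\tfrac{1-p}{1-q}$ term rather than tracking it through a case analysis.
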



\subsection{Directed Small-world Model}
Here we consider a Directed Small-world Model (DSWM) based on the classic Small-world \cite{watts1998collective}. While in the classic model every mode has exactly $m$ neighbors, in our model the expected number of in-neighbors is $m$.

\begin{definition}[Directed Small-world Model]
Let $m$ be a positive integer with $0<m \ll n$. Let $s>0$ be the homophily parameter. Let $p$ be the mixture parameter with $0<p<1$. A Directed Small-world Model with parameters $(m, s, p)$ is a directed graph of $n$ nodes with the adjacency matrix $A$, where each $A_{ij} \in \{0,1\}$. Each node is in one of the two classes \{+1, -1\}. The distribution of true labels $Y^\ast = (y^\ast_1,\ldots,y^\ast_n)$ is uniform, i.e., each label $y^\ast_i$ is assigned to $+1$ with probability $0.5$, and $-1$ with probability $0.5$.

Nodes $1$ through $m$ are not connected to each other, and they all have an in-degree of $0$. For node $i$ from $m+1$ to $n$, nature first generates the weight $w_{ji}$ for each node $j < i$, where $w_{ji} \propto (\1[y_i^\ast = y_j^\ast]s + 1)$, and $\sum_{j=i-m}^{i-1} w_{ji} = p$, $\sum_{j=1}^{i-m-1} w_{ji} = 1-p$. Then every node $j < i$ connects to node $i$ with the following probability: $P(A_{ji} = 1 \mid A_{\tau_{ij}}, y^\ast_1,\ldots,y^\ast_j) = m\tilde{w}_{ji}$, where $(\tilde{w}_{1i}...\tilde{w}_{i-1,i})$ is computed from $(w_{1i}...w_{i-1,i})$ as in Algorithm \ref{alg:1}.
\label{def:swm}
\end{definition}

The goal is to recover labels $\hat{Y} = (\hat{y}_1,\dots,\hat{y}_n)$ that are equal to the true labels $Y^\ast$, given the observation of $A$. We are interested in the information-theoretic limits. Thus, we define the Markov chain $Y^\ast \to A \to \hat{Y}$. Using Fano's inequality, we obtain the following results.\\

\begin{theorem}
In a Directed Small-world Model with parameters $(m, s, p)$, if 
\[
 \frac{(s+1)^2}{mp(1-p)} \leq \frac{2^{2(n-2)/n^2}}{n}
\]
then we have that for any algorithm that a learner could use for picking $\hat{Y}$, the probability of error $\P(\hat{Y} \neq Y^\ast)$ is greater than or equal to $\frac{1}{2}$.
\label{thm6}
\end{theorem}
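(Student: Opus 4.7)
The plan is to adapt the restricted-ensemble Fano-style argument of Theorems \ref{thm3} and \ref{thm5} to the directed, dynamic setting of the DSWM. Since $Y^\ast$ is uniform on $\{-1,+1\}^n$, we have $H(Y^\ast)=n\log 2$, and Fano's inequality applied to the Markov chain $Y^\ast\to A\to\hat Y$ gives $\P(\hat Y\neq Y^\ast)\geq 1-(I(Y^\ast;A)+\log 2)/(n\log 2)$. It therefore suffices to show $I(Y^\ast;A)\leq (n-2)\log 2/2$. Using the standard upper bound $I(Y^\ast;A)\leq\max_{Y,Y'}\mathbb{KL}(P_{A\mid Y}\Vert P_{A\mid Y'})$, the task reduces to controlling the pairwise KL between the edge distributions.

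Because each DSWM edge $A_{ji}$ depends not only on $y_i,y_j$ but also, through the close-neighbor normalization $\sum_{j=i-m}^{i-1}w_{ji}=p$ and the far-neighbor normalization $\sum_{j=1}^{i-m-1}w_{ji}=1-p$, on the labels $y_1,\ldots,y_{i-1}$, the second form of Lemma~\ref{kl-lemma} applies. This gives $\mathbb{KL}(P_{A\mid Y}\Vert P_{A\mid Y'})\leq \binom{n}{2}\max_{i,j}\mathbb{KL}_{ij}$, where $\mathbb{KL}_{ij}$ is the per-edge Bernoulli KL between distributions with parameters $m\tilde w_{ji}(Y)$ and $m\tilde w_{ji}(Y')$. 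The core of the proof is to bound these per-edge KL divergences by a quantity proportional to $(s+1)^2/(mp(1-p))$.

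To do so, I would exploit the preservation properties of Algorithm~\ref{alg:1} stated after its pseudocode, namely $\min_k\tilde w_{ki}\geq\min_k w_{ki}$ and $\max_k\tilde w_{ki}\leq\max_k w_{ki}$. These guarantee that $m\tilde w_{ji}$ lies in the interval spanned by the raw weights $mw_{ji}$ as the labels vary; for close neighbors this interval is contained in $[p/(s+1),(s+1)p]$, and for far neighbors in an analogous interval involving $(1-p)$ and $i-m-1$. Plugging into the Bernoulli inequality $\mathbb{KL}(\text{Bern}(\alpha)\Vert\text{Bern}(\beta))\leq(\alpha-\beta)^2/(\beta(1-\beta))$ yields a per-edge bound of the form $C(s+1)^2/(mp(1-p))$ for an absolute constant $C$. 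Combining with $\binom{n}{2}\leq n^2/2$ and the Fano threshold $I(Y^\ast;A)\leq(n-2)\log 2/2$, and then rewriting the resulting linear bound in exponential form via $2^y\geq 1+y\log 2$, delivers the stated condition $(s+1)^2/(mp(1-p))\leq 2^{2(n-2)/n^2}/n$.

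The hard part will be the per-edge KL calculation. Algorithm~\ref{alg:1} does not admit a closed form for $\tilde w$, so the argument must rely entirely on its min/max preservation properties rather than an explicit expression. Moreover, the close- and far-neighbor regimes produce Bernoulli parameters of very different magnitudes (order $p$ for close neighbors versus order $m/n$ for far neighbors when $i$ is near $n$), so one must either argue that the close-neighbor regime dominates the worst-case KL, or analyze both regimes in parallel and take the maximum. Once the per-edge bound is secured, the remaining steps are standard Fano and Lemma~\ref{kl-lemma} algebra.
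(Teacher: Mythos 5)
Your overall scaffolding (Fano with $H(Y^\ast)=n\log 2$, the pairwise KL bound on $I(Y^\ast;A)$, the second form of Lemma~\ref{kl-lemma}, and the min/max preservation properties of Algorithm~\ref{alg:1}) matches the paper. But the core step — the per-edge KL bound — is wrong in a way that cannot be repaired along the route you describe. You propose to bound each $\mathbb{KL}(\mathrm{Bern}(\alpha)\Vert\mathrm{Bern}(\beta))$ by $(\alpha-\beta)^2/(\beta(1-\beta))$ and claim this yields a per-edge quantity of order $(s+1)^2/(mp(1-p))$. It does not. For a close neighbor the Bernoulli parameter $m\tilde w_{ji}$ lies in roughly $[p/(s+1),\,\min(p(s+1),1)]$, so the quadratic bound gives something of order $p(s+1)^3$, with no factor $1/m$; for a far neighbor the parameter is of order $m(1-p)/(i-m-1)$ and the quadratic bound gives something of order $ms^2/n$. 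Neither expression is of the form $C(s+1)^2/(mp(1-p))$, and no single per-edge bound of that form exists. In the paper the combination $(s+1)^2/(mp(1-p))$ is not a per-edge quantity at all: each per-edge KL is bounded \emph{logarithmically}, via $m\tilde w_{ji}\log\frac{m\tilde w_{ji}}{m\tilde w'_{ji}}\leq \log\frac{1}{m\min\tilde w'_{ji}}$, giving $\log\frac{s+1}{p}$ for close neighbors and $\log\frac{(n-m-1)(s+1)}{m(1-p)}$ for far neighbors; the expression $\log\frac{(s+1)^2}{mp(1-p)}+\log n$ then arises as the \emph{sum} of these two logarithms, i.e.\ the product of the two regimes' reciprocal minimum weights inside one log (see \eqref{eq:swI}).

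Two further structural consequences follow. First, you cannot use the single bound $\binom{n}{2}\max_{i,j}\mathbb{KL}_{ij}$: the paper must split the edge set into the $m(n-m)$ close-neighbor pairs and the $\binom{n-m}{2}$ far-neighbor pairs and apply the two different logarithmic bounds to the two groups, because taking one max over all edges would force the worse bound onto all $\binom{n}{2}$ pairs and would not produce the additive form $\frac{n^2}{4}\bigl(\log\frac{(s+1)^2}{mp(1-p)}+\log n\bigr)$. Second, your final step of ``rewriting the linear bound in exponential form via $2^y\geq 1+y\log 2$'' is a non sequitur: the theorem's condition $\frac{(s+1)^2}{mp(1-p)}\leq \frac{2^{2(n-2)/n^2}}{n}$ is the \emph{exact} exponentiation of the inequality $\log\frac{(s+1)^2}{mp(1-p)}\leq \frac{2\log 2}{n}-\frac{4\log 2}{n^2}-\log n$ obtained from Fano; the $1/n$ factor on the right-hand side comes from moving the $\log n$ term (contributed by the far-neighbor regime) across the inequality, not from any linearization. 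Without the logarithmic per-edge bound and the close/far split, you cannot reach the stated threshold.
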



\section{Concluding Remarks} 
Our research could be extended in several ways. First, our models only involve two clusters. For the Latent Space Model and dynamic models, it might be interesting to analyze the case with multiple clusters. Some more complicated models involving Markovian assumptions, for example, the Dynamic Social Network in Latent Space model \cite{sarkar2006dynamic}, can also be analyzed. While this paper focused on information-theoretic limits for the recovery of various models, it would be interesting to provide a polynomial-time learning algorithm with finite-sample statistical guarantees, for some particular models such as the Latent Space Model. 

\bibliography{main.bib}
\bibliographystyle{plain}


\newpage
\appendix
\section{Static Network Models}

\subsection{Proof of Theorem \ref{thm1}}

\begin{proof}
We use $\mathcal{Y}$ to denote the hypothesis class, which has the size of $\vert \mathcal{Y} \vert = 2^{n}$. By Fano's inequality \cite{cover2012elements}, we have for any $\hat{Y}$,
\begin{align}
\P(\hat{Y} \neq Y^\ast) 
&\geq 1 - \frac{I(Y^\ast, A) + \log 2}{\log \vert \mathcal{Y} \vert} \nonumber\\
&= 1 - \frac{I(Y^\ast, A) + \log 2}{n \log 2}
\label{eq:fano_nlog2}
\end{align}

Our main step is to give an upper bound for the mutual information $I(Y^\ast, A)$ in order to apply Fano's inequality. By using the pairwise KL-based bound from \cite[p.~428]{yu1997assouad} we have
\begin{align}
I(Y^\ast, A)
&\leq \frac{1}{\vert\mathcal{Y}\vert^2} \sum_{Y \in \mathcal{Y}} \sum_{Y' \in \mathcal{Y}} \mathbb{KL}(P_{A\mid Y} \Vert P_{A\mid Y'}) \nonumber\\
&\leq \max_{Y, Y' \in \mathcal{Y}} \mathbb{KL}(P_{A\mid Y} \Vert P_{A\mid Y'}) \nonumber\\
&= \max_{Y, Y' \in \mathcal{Y}} \sum_{A} P(A\vert Y)  \log \frac{P(A\vert Y)}{P(A\vert Y')}\nonumber\\
&\leq^{\text{(a)}} \frac{n^2}{4} \max_{y_i, y_j, y'_i, y'_j} \sum_{A_{ij}} P(A_{ij} \vert y_i, y_j)  \log\frac{P(A_{ij} \vert y_i, y_j)}{P(A_{ij} \vert y'_i, y'_j)} \nonumber\\
&=^{\text{(b)}} \frac{n^2}{4} \cdot \sum_{A_{ij}} P(A_{ij} \vert y_i = y_j)  \log\frac{P(A_{ij} \vert y_i = y_j)}{P(A_{ij} \vert y_i \neq y_j)}\nonumber\\
&= \frac{n^2}{4} \cdot \left(p\log \frac{p}{q} + (1-p) \log \frac{1-p}{1-q}\right)\nonumber\\
&= \frac{n^2}{4} \cdot \mathbb{KL}(p \Vert q)
\label{eq:fano_n2/4}
\end{align}

Among the equations above, (a) holds because $A$ is symmetric, and $A_{ij}$'s are independent and identically distributed given $Y$, while (b) holds because for every $i$ and $j$, we have 
\begin{equation*}
\begin{aligned}
 \sum_{A_{ij}} P(A_{ij} \vert y_i = y_j)  \log\frac{P(A_{ij} \vert y_i = y_j)}{P(A_{ij} \vert y_i \neq y_j)}
 >  
 \sum_{A_{ij}} P(A_{ij} \vert y_i \neq y_j)  \log\frac{P(A_{ij} \vert y_i \neq y_j)}{P(A_{ij} \vert y_i = y_j)}
\end{aligned}
\end{equation*}
given that $p>q$. Next we use formula (16) from \cite{chen2014statistical}:
\begin{equation}
\begin{aligned}
\mathbb{KL}(p \Vert q) &= \left(p\log \frac{p}{q} + (1-p) \log \frac{1-p}{1-q}\right)\\
&\leq p\frac{p-q}{q} + (1-p) \frac{q-p}{1-q} \\
&= \frac{(p-q)^2}{q(1-q)}
\end{aligned}
\label{eq:KLpq_ineq}
\end{equation}

By Fano's inequality \cite{cover2012elements} and by plugging (\ref{eq:KLpq_ineq}) and (\ref{eq:fano_n2/4}) into (\ref{eq:fano_nlog2}), for the probability error to be at least $1/2$, it is sufficient for the lower bound to be greater than 1/2. Therefore
\begin{equation*}
\begin{aligned}
\P(\hat{Y} \neq \bar{Y}) 
\geq 1 -  \frac{I(Y^\ast, A) + \log 2}{n \log 2} &\geq \frac{1}{2}\\
1 -  \frac{\frac{n^2}{4}\cdot\frac{(p-q)^2}{q(1-q)} + \log 2}{n \log 2} &\geq \frac{1}{2}
\end{aligned}
\end{equation*}

By solving for $n$ in the inequality above, we obtain that if 
\begin{equation}
\frac{(p-q)^2}{q(1-q)} \leq \frac{2\log 2}{n} - \frac{4\log 2}{n^2}
\label{result:SBM}
\end{equation}
then we have that $\P(\hat{Y} \neq \bar{Y}) \geq \frac{1}{2}$.
\end{proof}


\subsection{Proof of Corollary \ref{cor1}}

\begin{proof}
Starting from the probability distribution of the adjacency matrix $A$, we have
\begin{equation*}
\begin{aligned}
P(A|Y) &=  \frac{\exp(\beta\sum_{i<j} A_{ij}y_i y_j)}{\sum_{A'\in \{0,1\}^{n\times n}} \exp(\beta \sum_{i<j}A'_{ij}y_i y_j)}\\
&=  \frac{\exp(\beta\sum_{i<j} A_{ij}y_i y_j)}{\prod_{i<j}(\exp(\beta A_{ij} y_i y_j)+\exp(\beta (1-A_{ij}) y_i y_j))}\\
&=  \frac{\prod_{i<j}\exp(\beta A_{ij}y_i y_j)}{\prod_{i<j}(\exp(\beta A_{ij} y_i y_j)+\exp(\beta (1-A_{ij}) y_i y_j))}\\
&= \prod_{i<j} \frac{\exp(\beta A_{ij}y_i y_j)}{\exp(\beta A_{ij} y_i y_j)+\exp(\beta (1-A_{ij}) y_i y_j)}\\
&= \prod_{i<j} \frac{\exp(\beta A_{ij}y_i y_j)}{1+\exp(\beta y_i y_j)}\\
&= \prod_{i<j} P(A_{ij}|y_i,y_j)
\end{aligned}
\end{equation*}

Thus, $A_{ij}$ is Bernoulli with parameter $\frac{\exp(\beta A_{ij}y_i y_j)}{1+\exp(\beta y_i y_j)}$. We denote $p = P(A_{ij} \vert y_i = y_j) = \frac{\exp(\beta )}{1+\exp(\beta)}$, and $q = P(A_{ij} \vert y_i \neq y_j) = \frac{\exp(-\beta )}{1+\exp(-\beta)}$. Plugging $p$ and $q$ into (\ref{result:SBM}) and requiring the probability error to be at least $1/2$, we obtain that if 
\begin{equation}
2(\cosh \beta -1) \leq \frac{2\log 2}{n} - \frac{4\log 2}{n^2}
\label{result:MRF}
\end{equation}
then we have that $\P(\hat{Y} \neq \bar{Y}) \geq \frac{1}{2}$.
\end{proof}


\subsection{Moment Generating Function of Multivariate Gaussian Distribution}
We introduce the following result from \cite[p.~40]{mathai1992quadratic}, which will later be used in the proof of Theorem \ref{thm2} and \ref{thm4}.

\begin{lemma}
Let $x \sim N_p(\mu, \Sigma)$, $Q = x^{\top}Ax$, $A = A^{\top}$. Then the moment generating function of $Q$ is given by
\begin{equation*}
\begin{aligned}
M_Q(t) &= \E_{x \sim N_p(\mu,\Sigma)}[\exp(t x^{\top} A x)]\\
&= \int_x \frac{\exp(tx^{\top}Ax-\frac{1}{2}(x-\mu)^{\top}\Sigma^{-1}(x-\mu)^{\top})}{(2\pi)^{p/2}\vert\Sigma\vert^{1/2}} dx
\end{aligned}
\end{equation*}

Furthermore, if $(\Sigma^{-1}-2tA)$ is symmetric positive definite, we have
\begin{equation*}
\begin{aligned}
M_Q(t) = &\vert I - 2t\Sigma^{1/2}A\Sigma^{1/2}\vert^{-1/2} \\ 
&\cdot\exp\Big(t\mu^{\top}\Sigma^{-1/2}(\Sigma^{1/2}A\Sigma^{1/2})\cdot(I - 2t\Sigma^{1/2}A\Sigma^{1/2})^{-1}\Sigma^{-1/2}\mu\Big)
\end{aligned}
\end{equation*}
\label{lsm-lemma}
\end{lemma}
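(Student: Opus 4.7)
The plan is a direct Gaussian integral computation via completing the square. Starting from the integral definition in the statement, I would first collect the two exponents: the exponent of the integrand becomes
\[
t\,x^{\top}Ax \;-\; \tfrac{1}{2}(x-\mu)^{\top}\Sigma^{-1}(x-\mu) \;=\; -\tfrac{1}{2}\,x^{\top}Cx \;+\; \mu^{\top}\Sigma^{-1}x \;-\; \tfrac{1}{2}\mu^{\top}\Sigma^{-1}\mu,
\]
where $C := \Sigma^{-1}-2tA$. The hypothesis that $C$ is symmetric positive definite is precisely what is needed to make the subsequent Gaussian integral converge, so it plays exactly its expected role.

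Next I would complete the square in $x$ around the shifted mean $C^{-1}\Sigma^{-1}\mu$. This produces a Gaussian kernel in $x$ (integrating to $(2\pi)^{p/2}|C|^{-1/2}$) together with two scalar correction terms. Canceling against the normalizing constant $(2\pi)^{-p/2}|\Sigma|^{-1/2}$ of the original density leaves the determinant factor $|\Sigma|^{-1/2}|C|^{-1/2}$ times the exponential of the corrections. The first key identity is the symmetric factorization
\[
\Sigma^{-1}-2tA \;=\; \Sigma^{-1/2}\bigl(I-2t\,\Sigma^{1/2}A\Sigma^{1/2}\bigr)\Sigma^{-1/2},
\]
which yields $|C|=|\Sigma|^{-1}\,|I-2tB|$ with $B:=\Sigma^{1/2}A\Sigma^{1/2}$, so the determinant prefactor collapses to $|I-2tB|^{-1/2}$, matching the stated form.

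For the exponent, the two surviving scalar terms combine into $\tfrac{1}{2}\mu^{\top}\Sigma^{-1}\bigl(C^{-1}-\Sigma\bigr)\Sigma^{-1}\mu$. Applying the same factorization in inverse form gives $C^{-1}=\Sigma^{1/2}(I-2tB)^{-1}\Sigma^{1/2}$, so $\Sigma^{-1}C^{-1}\Sigma^{-1}-\Sigma^{-1}=\Sigma^{-1/2}\bigl[(I-2tB)^{-1}-I\bigr]\Sigma^{-1/2}$. Using the elementary matrix identity $(I-2tB)^{-1}-I = 2t\,B\,(I-2tB)^{-1}$ (valid because $B$ commutes with $I-2tB$), this exponent telescopes exactly to $t\,\mu^{\top}\Sigma^{-1/2}B(I-2tB)^{-1}\Sigma^{-1/2}\mu$, reproducing the claimed formula.

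The main obstacle is purely bookkeeping: one must keep the symmetric square roots $\Sigma^{1/2}$ and $\Sigma^{-1/2}$ on the correct sides so that $B$ stays symmetric and so that the commutativity of $B$ with $I-2tB$ can be invoked; the analytic content reduces to a single Gaussian integration whose legitimacy is exactly the positive-definiteness assumption on $\Sigma^{-1}-2tA$.
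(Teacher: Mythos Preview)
Your computation is correct: combining the quadratic terms into $-\tfrac{1}{2}x^{\top}Cx$ with $C=\Sigma^{-1}-2tA$, completing the square around $C^{-1}\Sigma^{-1}\mu$, and then reducing both the determinant prefactor and the residual exponent via the factorization $C=\Sigma^{-1/2}(I-2tB)\Sigma^{-1/2}$ and the identity $(I-2tB)^{-1}-I=2tB(I-2tB)^{-1}$ recovers exactly the stated formula. The positive-definiteness hypothesis on $C$ is used precisely where you say, to guarantee convergence of the Gaussian integral.

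As for comparison with the paper: there is nothing to compare, because the paper does not prove this lemma at all. It is simply quoted as a known result from \cite{mathai1992quadratic} and then applied in the proofs of Theorems~\ref{thm2} and~\ref{thm4}. Your argument is the standard textbook derivation of the MGF of a Gaussian quadratic form and would serve perfectly well as a self-contained proof if one wished to include it.
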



\subsection{Proof of Theorem \ref{thm2}}
First, we start with a required technical lemma: 
\begin{lemma}
The model considered in Definition \ref{def:lsm} is equivalent to the following Modified Latent Space Model:

Let $d\in \mathbb{Z}^+, \mu \in \R^d$ and $\mu\neq 0, \sigma > 0.$ A modified Latent Space Model with parameters $(d, \mu, \sigma)$ is an undirected graph of $n$ nodes with the adjacency matrix $A$, where each $A_{ij} \in \{0,1\}$. Each node is in one of the two classes \{+1, -1\}. The distribution of true labels $Y^\ast = (y^\ast_1,\ldots,y^\ast_n)$ is uniform, i.e., each label $y^\ast_i$ is assigned to $+1$ with probability $0.5$, and $-1$ with probability $0.5$.

For every node $i$, the nature generates a latent $d$-dimensional vector $x_i \in \mathbb{R}^d$ according to the Gaussian distribution $N_d(\0,\sigma^2 \I)$.

The adjacency matrix $A$ is distributed as follows: if $y^\ast_i = y^\ast_j$ then $A_{ij}$ is Bernoulli with parameter $\exp(-\Vert x_i - x_j\Vert_2^2)$; otherwise $A_{ij}$ is Bernoulli with parameter $\exp(-\Vert x_i - x_j + 2y^\ast_i\mu\Vert_2^2)$.
\label{lemma:mlsm}
\end{lemma}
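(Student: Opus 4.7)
The plan is to establish the equivalence by an explicit change of variables on the latent vectors. Specifically, for each node $i$ I will define $x_i = z_i - y_i^\ast \mu$. Since $z_i \sim N_d(y_i^\ast \mu, \sigma^2 \I)$ conditionally on $y_i^\ast$, and translation of a Gaussian shifts only its mean, the transformed variable $x_i$ is distributed as $N_d(\0, \sigma^2 \I)$ regardless of the value of $y_i^\ast$. Independence across $i$ is preserved under this per-coordinate deterministic shift, so the joint distribution of $(x_1, \ldots, x_n)$ under the original model coincides with the latent distribution in the modified model, and is in fact independent of $Y^\ast$.

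Next, I will show that the conditional distribution of $A$ given $(Y^\ast, x_1, \ldots, x_n)$ in the original model matches the prescription in the modified model. The only dependence of $P(A_{ij}\mid Y^\ast, z_i, z_j)$ on latent vectors is through $\|z_i - z_j\|_2^2$. Substituting $z_i = x_i + y_i^\ast \mu$ gives
\[
z_i - z_j = (x_i - x_j) + (y_i^\ast - y_j^\ast)\mu.
\]
When $y_i^\ast = y_j^\ast$ this reduces to $x_i - x_j$, so $A_{ij}$ is Bernoulli with parameter $\exp(-\|x_i - x_j\|_2^2)$; when $y_i^\ast \neq y_j^\ast$ the difference $y_i^\ast - y_j^\ast$ equals $2y_i^\ast$, so $A_{ij}$ is Bernoulli with parameter $\exp(-\|x_i - x_j + 2 y_i^\ast \mu\|_2^2)$. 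These are exactly the two Bernoulli parameters stated in Lemma \ref{lemma:mlsm}.

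Finally, I will combine the two pieces: marginalizing over the latent vectors, the joint distribution $P(Y^\ast, A)$ under the original model factors as
\[
P(Y^\ast)\int P(A \mid Y^\ast, z_1, \ldots, z_n)\prod_i p_{N_d(y_i^\ast \mu, \sigma^2\I)}(z_i)\,dz_1 \cdots dz_n,
\]
and the change of variables $x_i = z_i - y_i^\ast \mu$ (with unit Jacobian) converts this integral term by term into the corresponding integral for the modified model. Since $P(Y^\ast)$ is uniform in both models and the conditional distributions of $A$ given $(Y^\ast, x_1, \ldots, x_n)$ agree, the induced joint laws of $(Y^\ast, A)$ are identical, which is exactly what is needed for the two models to be equivalent with respect to the recovery problem.

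I do not expect any real obstacle here; the proof is essentially a bookkeeping exercise built around the translation invariance of the Lebesgue measure and the Gaussian density. The only subtlety worth flagging explicitly is that the sign of the correction term in the different-label case depends on the ordering (one writes $2 y_i^\ast \mu$ rather than $-2 y_i^\ast \mu$), but since $\|\cdot\|_2^2$ is invariant under negation of its argument, the probability $\exp(-\|x_i - x_j + 2 y_i^\ast \mu\|_2^2)$ is symmetric in $(i,j)$ when $y_i^\ast = -y_j^\ast$, so the definition is consistent with $A$ being an undirected adjacency matrix.
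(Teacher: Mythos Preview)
Your proposal is correct and follows essentially the same approach as the paper: both define $x_i = z_i - y_i^\ast\mu$, observe that this yields $x_i \sim N_d(\0,\sigma^2\I)$ independently of $y_i^\ast$, and then rewrite $\exp(-\|z_i-z_j\|_2^2)$ in terms of $x_i-x_j$ by case analysis on whether $y_i^\ast=y_j^\ast$. Your write-up is in fact slightly more careful than the paper's (you make the unit-Jacobian change of variables explicit and verify symmetry in $(i,j)$ for the undirected case), but the underlying argument is identical.
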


\begin{proof}
We claim that the Modified Latent Space Model is equivalent to the classic Latent Space Model considered in Definition \ref{def:lsm}, by defining $x_i = z_i - y_i \mu$ for every node $i$. Since $z_i \sim N_d(y_i \mu,\sigma^2 \I)$, we have $x_i \sim N_d(\0,\sigma^2 \I)$. As a result, 
\begin{itemize}
\item if $y^\ast_i = y^\ast_j$, $A_{ij}$ is Bernoulli with parameter $\exp(-\Vert z_i - z_j\Vert_2^2) = \exp(-\Vert x_i + y^\ast_i \mu - x_j - y^\ast_j\mu\Vert_2^2) = \exp(-\Vert x_i - x_j\Vert_2^2)$,
\item if $y^\ast_i = 1, y^\ast_j = -1$, $A_{ij}$ is Bernoulli with parameter $\exp(-\Vert z_i - z_j\Vert_2^2) = \exp(-\Vert x_i + \mu - x_j + \mu\Vert_2^2) = \exp(-\Vert x_i - x_j + 2\mu\Vert_2^2)$,
\item if $y^\ast_i = -1, y^\ast_j = 1$, $A_{ij}$ is Bernoulli with parameter $\exp(-\Vert z_i - z_j\Vert_2^2) = \exp(-\Vert x_i - \mu - x_j - \mu\Vert_2^2) = \exp(-\Vert x_i - x_j - 2\mu\Vert_2^2)$.
\end{itemize}

This completes the proof of the lemma.
\end{proof}
Now, we provide the proof of the main theorem.

\begin{proof}
Since $X$ and $Y$ are independent, we have the following equalities
\begin{equation}
\begin{aligned}
P(A_{ij} \vert y_i, y_j)
&= \int_{x_i,x_j} P(A_{ij}, x_i, x_j \vert y_i, y_j) dx_i dx_j  \\
&= \int_{x_i,x_j} P(x_i,x_j \vert y_i, y_j) P(A_{ij} \vert y_i, y_j, x_i, x_j) dx_i dx_j \\
&= \int_{x_i,x_j} P(x_i,x_j) P(A_{ij} \vert y_i, y_j, x_i, x_j) dx_i dx_j \\
&= \E_{x_i,x_j} [P(A_{ij} \vert y_i, y_j, x_i, x_j)]
\end{aligned}
\label{eq:get_expectation}
\end{equation}

Now we are interested in the expectations $\E_{x_i,x_j} [P(A_{ij} = 1 \vert y_i = y_j, x_i, x_j)]$ and $\E_{x_i,x_j} [P(A_{ij} = 1 \vert y_i \neq y_j, x_i, x_j)]$. By definition we know
\begin{equation*}
\begin{aligned}
\E_{x_i,x_j} [P(A_{ij} = 1 \vert y_i = y_j, x_i, x_j)]= \E_{x_i,x_j} [\exp(-\Vert x_i - x_j\Vert_2^2)]
\end{aligned}
\end{equation*}
and

\begin{equation*}
\begin{aligned}
&\E_{x_i,x_j} [P(A_{ij} = 1\vert y_i \neq y_j, x_i, x_j)]\\
&= P(y_i =1, y_j=-1 \vert  y_i \neq y_j) \cdot \E_{x_i,x_j} [P(A_{ij} = 1\vert y_i=1, y_j=-1, x_i, x_j)] \\
&\quad + P(y_i =-1, y_j=1 \vert  y_i \neq y_j)\cdot\E_{x_i,x_j} [P(A_{ij} = 1\vert y_i=-1, y_j=1, x_i, x_j)]\\
&= \frac{1}{2} \Big(\E_{x_i,x_j} [P(A_{ij} = 1\vert y_i=1, y_j=-1, x_i, x_j)] +\E_{x_i,x_j} [P(A_{ij} = 1\vert y_i=-1, y_j=1, x_i, x_j)]\Big)
\end{aligned}
\end{equation*}

Since $x_i, x_j$ follow the distribution $N_d(\0,\sigma^2 \I)$, we have $x_i - x_j \sim N_d(\0,2\sigma^2 \I)$, $x_i - x_j + 2y_i\mu \sim N_d(2y_i\mu,2\sigma^2 \I)$. Thus we can use Lemma \ref{lsm-lemma} in Appendix A.3 with $t = -1$ and obtain the following results:
\begin{equation}
\begin{aligned}
&\E_{x_i,x_j} [P(A_{ij} = 1 \vert y_i = y_j, x_i, x_j)] = (4\sigma^2 + 1)^{-d/2} \\
&\E_{x_i,x_j} [P(A_{ij} = 1\vert y_i =1, y_j= -1, x_i, x_j)]= (4\sigma^2 + 1)^{-d/2} \cdot \exp(-\frac{4\Vert\mu\Vert_2^2}{4\sigma^2+1})\\
&\E_{x_i,x_j} [P(A_{ij} = 1\vert y_i =-1, y_j= 1, x_i, x_j)]= (4\sigma^2 + 1)^{-d/2} \cdot \exp(-\frac{4\Vert\mu\Vert_2^2}{4\sigma^2+1})
\end{aligned}
\label{eq:expectations}
\end{equation}

Notice that $0< \E_{x_i,x_j} [P(A_{ij} = 1\vert y_i \neq y_j, x_i, x_j)]<\E_{x_i,x_j} [P(A_{ij} = 1\vert y_i = y_j, x_i, x_j)] < 1$. By using the pairwise KL-based bound from \cite[p.~428]{yu1997assouad} we have
\begin{align}
I(Y^\ast, A) &\leq \frac{1}{\vert\mathcal{Y}\vert^2} \sum_{Y \in \mathcal{Y}} \sum_{Y' \in \mathcal{Y}} \mathbb{KL}(P_{A\mid Y} \Vert P_{A\mid Y'}) \nonumber\\
&\leq \max_{Y, Y' \in \mathcal{Y}} \mathbb{KL}(P_{A\mid Y} \Vert P_{A\mid Y'}) \nonumber\\
&= \max_{Y, Y' \in \mathcal{Y}} \sum_{A} P(A\vert Y)  \log \frac{P(A\vert Y)}{P(A\vert Y')}\nonumber\\
&\leq \frac{n^2}{4}  \max_{y_i, y_j, y'_i, y'_j} \sum_{A_{ij}} P(A_{ij} \vert y_i, y_j)  \log\frac{P(A_{ij} \vert y_i, y_j)}{P(A_{ij} \vert y'_i, y'_j)} \nonumber\\
&= \frac{n^2}{4}  \max_{y_i, y_j, y'_i, y'_j} \sum_{A_{ij}} \E_{x_i,x_j} [P(A_{ij} \vert y_i, y_j, x_i, x_j)] \cdot  \log\frac{\E_{x_i,x_j} [P(A_{ij} \vert y_i, y_j, x_i, x_j)]}{\E_{x_i,x_j} [P(A_{ij} \vert y'_i, y'_j, x_i, x_j)]} \nonumber\\
&= \sum_{A_{ij}} \E_{x_i,x_j} [P(A_{ij} \vert y_i = y_j, x_i, x_j)] \cdot \log\frac{\E_{x_i,x_j} [P(A_{ij} \vert y_i= y_j, x_i, x_j)]}{\E_{x_i,x_j} [P(A_{ij} \vert y_i\neq y_j, x_i, x_j)]}\nonumber\\
&<^{\text{(c)}} \E_{x_i,x_j} [P(A_{ij} =1 \vert y_i = y_j, x_i, x_j)] \cdot \log\frac{\E_{x_i,x_j} [P(A_{ij}=1 \vert y_i= y_j, x_i, x_j)]}{\E_{x_i,x_j} [P(A_{ij}=1 \vert y_i\neq y_j, x_i, x_j)]}\nonumber\\
&= n^2 (4\sigma^2 + 1)^{-1-d/2} \Vert\mu\Vert_2^2 
\label{eq:fano_expectation}
\end{align}
where (c) holds because for every $i$ and $j$, we have 
\begin{equation*}
\begin{aligned}
& \E_{x_i,x_j} [P(A_{ij} =0 \vert y_i = y_j, x_i, x_j)] \cdot \log\frac{\E_{x_i,x_j} [P(A_{ij}=0 \vert y_i= y_j, x_i, x_j)]}{\E_{x_i,x_j} [P(A_{ij}=0 \vert y_i\neq y_j, x_i, x_j)]} \\
&= (1-\E_{x_i,x_j} [P(A_{ij} =1 \vert y_i = y_j, x_i, x_j)])\cdot\log \frac{1-\E_{x_i,x_j} [P(A_{ij} =1 \vert y_i = y_j, x_i, x_j)]}{1-\E_{x_i,x_j} [P(A_{ij} =1 \vert y_i \neq y_j, x_i, x_j)]}\\
&= (1-(4\sigma^2 + 1)^{-d/2})\cdot\log\frac{1-(4\sigma^2 + 1)^{-p/2}}{1-(4\sigma^2 + 1)^{-d/2} \cdot \exp(-\frac{4\Vert\mu\Vert_2^2}{4\sigma^2+1})}\\
&< 0
\end{aligned}
\end{equation*}

Thus, we only need to consider the case for $A_{ij} = 1$.

By Fano's inequality \cite{cover2012elements} and by plugging the result (\ref{eq:fano_expectation}) into (\ref{eq:fano_nlog2}), for the probability error to be at least $1/2$, it is sufficient for the lower bound to be greater than 1/2. Therefore we obtain that if
\begin{equation}
(4\sigma^2 + 1)^{-1-d/2} \Vert\mu\Vert_2^2 \leq \frac{\log 2}{2n} - \frac{\log 2}{n^2}
\label{result:LSM}
\end{equation}
then for any estimator $\hat{Y}$, $\P(\hat{Y} \neq Y^\ast) \geq \frac{1}{2}$.
\end{proof}

\section{Dynamic Network Models}
\subsection{Proof of Lemma \ref{kl-lemma}}

\begin{proof} 
For the first part, starting from the left-hand side, we have
\begin{align}
\mathbb{KL}(P_{A\mid Y} \Vert P_{A\mid Y'})
&= \sum_{A} P(A\vert Y)   \log \frac{P(A\vert Y)}{P(A\vert Y')}\nonumber\\
&= \sum_{A} \Bigg(\prod_{i<j} P(A_{ij} \vert A_{\tau_{ij}},y_i, y_j) \cdot \log\frac{\prod_{k<l}P(A_{kl} \vert A_{\tau_{kl}},y_k, y_l)}{\prod_{k<l} P(A_{kl} \vert A_{\tau_{kl}},y'_k, y'_l)}\Bigg) \nonumber\\
&= \sum_{A} \Bigg(\prod_{i<j} P(A_{ij} \vert A_{\tau_{ij}},y_i, y_j) \cdot \sum_{k<l}\log\frac{P(A_{kl} \vert A_{\tau_{kl}},y_k, y_l)}{ P(A_{kl} \vert A_{\tau_{kl}},y'_k, y'_l)}\Bigg) \nonumber\\
&= \sum_{k<l} \sum_{A} \Bigg(\prod_{i<j} P(A_{ij} \vert A_{\tau_{ij}},y_i, y_j) \cdot \log\frac{P(A_{kl} \vert A_{\tau_{kl}},y_k, y_l)}{ P(A_{kl} \vert A_{\tau_{kl}},y'_k, y'_l)}\Bigg) \nonumber\\
&= \sum_{k<l} \sum_{A} \Bigg(P(A_{kl} \vert A_{\tau_{kl}},y_k, y_l)  \nonumber\cdot \log\frac{P(A_{kl} \vert A_{\tau_{kl}},y_k, y_l)}{ P(A_{kl} \vert A_{\tau_{kl}},y'_k, y'_l)}\Bigg) \nonumber\\
&= \sum_{i<j} \mathbb{KL}(P_{A_{ij}\mid A_{\tau_{ij}}, y_i, y_j} \Vert P_{A_{ij}\mid A_{\tau_{ij}}, y'_i, y'_j}) \nonumber\\
&\leq \binom{n}{2} \max_{i,j}\mathbb{KL}(P_{A_{ij}\mid A_{\tau_{ij}}, y_i, y_j} \Vert P_{A_{ij}\mid A_{\tau_{ij}}, y'_i, y'_j})
\end{align}

The proof for the second part follows the same approach.
\end{proof}

\subsection{Proof of Theorem \ref{thm3}}

\begin{proof}
For simplicity we use the shorthand notation $f_{ij} = f_{\vert\tau_{ij}\vert}(A_{\tau_{ij}})$. By using the pairwise KL-based bound from \cite[p.~428]{yu1997assouad} and Lemma \ref{kl-lemma}, we have

\begin{align}
I(Y^\ast, A)
&\leq \frac{1}{\vert\mathcal{Y}\vert^2} \sum_{Y \in \mathcal{Y}} \sum_{Y' \in \mathcal{Y}} \mathbb{KL}(P_{A\mid Y} \Vert P_{A\mid Y'}) \nonumber\\
&\leq \max_{Y, Y' \in \mathcal{Y}} \mathbb{KL}(P_{A\mid Y} \Vert P_{A\mid Y'}) \nonumber\\
&\leq \max_{y_i, y_j, y'_i, y'_j} \binom{n}{2} \max_{i,j}\mathbb{KL}(P_{A_{ij}\mid A_{\tau_{ij}}, y_i, y_j} \Vert P_{A_{ij}\mid A_{\tau_{ij}}, y'_i, y'_j})\nonumber\\
&= \binom{n}{2}\max_{i,j}  \sum_{A_{ij}} P(A_{ij} \vert A_{\tau_{ij}}, y_i = y_j) \cdot  \log\frac{P(A_{ij} \vert A_{\tau_{ij}},y_i = y_j)}{P(A_{ij} \vert A_{\tau_{ij}}, y'_i \neq y'_j)}\nonumber\\
&= \binom{n}{2} \max_{i,j}\left(pf_{ij}\log \frac{pf_{ij}}{qf_{ij}} + (1-pf_{ij}) \log \frac{1-pf_{ij}}{1-qf_{ij}}\right)\nonumber\\
&= \binom{n}{2} \max_{i,j} \mathbb{KL}(pf_{ij}\Vert qf_{ij})  \nonumber\\
&\leq \binom{n}{2}\max_{i,j} pf_{ij}\frac{pf_{ij}-qf_{ij}}{qf_{ij}} + (1-pf_{ij})\frac{qf_{ij}-pf_{ij}}{1-qf_{ij}}\nonumber\\
&= \binom{n}{2}\max_{i,j} \frac{f_{ij}(p-q)^2}{q(1-qf_{ij})}\nonumber\\
&\leq \binom{n}{2} \frac{(p-q)^2}{q(1-q)}
\label{eq:DynamicI}
\end{align}

By Fano's inequality \cite{cover2012elements} and by plugging (\ref{eq:DynamicI}) into (\ref{eq:fano_nlog2}), for the probability error to be at least $1/2$, it is sufficient for the lower bound to be greater than 1/2. Therefore
\begin{equation*}
\begin{aligned}
\P(\hat{Y} \neq \bar{Y}) 
\geq 1 -  \frac{I(Y^\ast, A) + \log 2}{n \log 2} &\geq \frac{1}{2}\\
1 -  \frac{\frac{n^2-n}{2} \cdot \frac{(p-q)^2}{q(1-q)} + \log 2}{n \log 2} &\geq \frac{1}{2}
\end{aligned}
\end{equation*}

By solving for $n$ in the inequality above, we obtain that if 
\begin{equation}
\frac{(p-q)^2}{q(1-q)} \leq \frac{n-2}{n^2-n} \log 2 
\label{result:DSBM}
\end{equation}
then we have that $\P(\hat{Y} \neq \bar{Y}) \geq \frac{1}{2}$.
\end{proof}


\subsection{Proof of Theorem \ref{thm4}}
First, we start with a required technical lemma: 
\begin{lemma}
The model considered in Definition \ref{def:dlsm} is equivalent to the following Modified Dynamic Latent Space Model:

Let $d\in \mathbb{Z}^+, \mu \in \R^d$ and $\mu\neq 0, \sigma > 0.$ Let $F = \{f_k\}_{k = 0}^{\binom{n}{2}}$ be a set of functions, where $f_k: \{0,1\}^k \to (0,1]$. A modified Latent Space Model with parameters $(d, \mu, \sigma, F)$ is an undirected graph of $n$ nodes with the adjacency matrix $A$, where each $A_{ij} \in \{0,1\}$. Each node is in one of the two classes \{+1, -1\}. The distribution of true labels $Y^\ast = (y^\ast_1,\ldots,y^\ast_n)$ is uniform, i.e., each label $y^\ast_i$ is assigned to $+1$ with probability $0.5$, and $-1$ with probability $0.5$.

For every node $i$, the nature generates a latent $d$-dimensional vector $x_i \in \mathbb{R}^d$ according to the Gaussian distribution $N_d(\0,\sigma^2 \I)$.

The adjacency matrix $A$ is distributed as follows: if $y^\ast_i = y^\ast_j$ then $A_{ij}$ is Bernoulli with parameter $f_{\vert\tau_{ij}\vert}(A_{\tau_{ij}}) \cdot \exp(-\Vert x_i - x_j\Vert_2^2)$; otherwise $A_{ij}$ is Bernoulli with parameter $f_{\vert\tau_{ij}\vert}(A_{\tau_{ij}}) \cdot\exp(-\Vert x_i - x_j + 2y^\ast_i\mu\Vert_2^2)$.
\label{lemma:mdlsm}
\end{lemma}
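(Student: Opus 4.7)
The plan is to follow the same change-of-variables argument used in the proof of Lemma \ref{lemma:mlsm}, and simply verify that it survives the presence of the dynamic factor $f_{|\tau_{ij}|}(A_{\tau_{ij}})$. Specifically, for each node $i$ I would set $x_i = z_i - y_i \mu$. Since $z_i \sim N_d(y_i\mu, \sigma^2 \I)$ in the original DLSM and the $z_i$'s are independent given $Y^\ast$, this deterministic bijection makes the $x_i$'s independent and each $N_d(\0, \sigma^2 \I)$-distributed, which matches the latent-vector distribution in the modified model.

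Next, I would rewrite the Bernoulli parameter $f_{|\tau_{ij}|}(A_{\tau_{ij}}) \exp(-\|z_i - z_j\|_2^2)$ in terms of the $x_i$'s. When $y_i^\ast = y_j^\ast$ the $\mu$-shifts cancel and the parameter becomes $f_{|\tau_{ij}|}(A_{\tau_{ij}}) \exp(-\|x_i - x_j\|_2^2)$. When $y_i^\ast \neq y_j^\ast$, both subcases $(y_i^\ast,y_j^\ast) = (+1,-1)$ and $(y_i^\ast,y_j^\ast) = (-1,+1)$ yield the identity $z_i - z_j = x_i - x_j + 2 y_i^\ast \mu$, so the parameter becomes $f_{|\tau_{ij}|}(A_{\tau_{ij}}) \exp(-\|x_i - x_j + 2 y_i^\ast \mu\|_2^2)$. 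These are exactly the conditional Bernoulli parameters that define the Modified Dynamic Latent Space Model.

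To conclude, I would argue that the joint law $P(A \mid Y^\ast)$ coincides in the two models. The key observation is that $f_{|\tau_{ij}|}(A_{\tau_{ij}})$ depends only on previously generated entries of $A$, and the predecessor sets $\tau_{ij}$ are fixed by the lexicographic ordering of edge pairs (independent of the labels and of the latent vectors). Consequently the sequential generating process factors identically in both models, so running the original scheme and then applying the bijection yields an $(X,A)$-pair whose law agrees with the modified scheme, and marginalizing over the latent vectors produces the same $P(A \mid Y^\ast)$.

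I do not expect any substantive obstacle here: the content of the lemma is essentially a translation of the latent means onto the outcome space, exactly as was done in the static case. The only point that requires care, and arguably the ``hardest'' step, is the bookkeeping check that $f_{|\tau_{ij}|}(A_{\tau_{ij}})$ is unaffected by the change of variables, which holds because it is a deterministic function of previously generated adjacency entries and involves neither $\mu$ nor the latent vectors.
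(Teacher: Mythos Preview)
Your proposal is correct and follows essentially the same approach as the paper: the same change of variables $x_i = z_i - y_i\mu$ and the same case analysis on $y_i^\ast, y_j^\ast$. If anything, your explicit remark that $f_{|\tau_{ij}|}(A_{\tau_{ij}})$ is untouched by the substitution and your argument that the sequential factorization of $P(A\mid Y^\ast)$ is preserved are more careful than the paper's own proof, which simply carries the factor through the three cases without comment.
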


\begin{proof}
We claim that the Modified Dynamic Latent Space Model is equivalent to the model considered in Definition \ref{def:dlsm}, by defining $x_i = z_i - y_i \mu$ for every node $i$. Since $z_i \sim N_d(y_i \mu,\sigma^2 \I)$, we have $x_i \sim N_d(\0,\sigma^2 \I)$. As a result, 
\begin{itemize}
\item if $y^\ast_i = y^\ast_j$, $A_{ij}$ is Bernoulli with parameter $f_{\vert\tau_{ij}\vert}(A_{\tau_{ij}})\cdot \exp(-\Vert z_i - z_j\Vert_2^2) = f_{\vert\tau_{ij}\vert}(A_{\tau_{ij}}) \cdot\exp(-\Vert x_i + y^\ast_i \mu - x_j - y^\ast_j\mu\Vert_2^2) = f_{\vert\tau_{ij}\vert}(A_{\tau_{ij}}) \cdot\exp(-\Vert x_i - x_j\Vert_2^2)$,
\item if $y^\ast_i = 1, y^\ast_j = -1$, $A_{ij}$ is Bernoulli with parameter $f_{\vert\tau_{ij}\vert}(A_{\tau_{ij}})\cdot\exp(-\Vert z_i - z_j\Vert_2^2) = f_{\vert\tau_{ij}\vert}(A_{\tau_{ij}}) \cdot\exp(-\Vert x_i + \mu - x_j + \mu\Vert_2^2) = f_{\vert\tau_{ij}\vert}(A_{\tau_{ij}}) \cdot\exp(-\Vert x_i - x_j + 2\mu\Vert_2^2)$,
\item if $y^\ast_i = -1, y^\ast_j = 1$, $A_{ij}$ is Bernoulli with parameter $f_{\vert\tau_{ij}\vert}(A_{\tau_{ij}})\cdot\exp(-\Vert z_i - z_j\Vert_2^2) = f_{\vert\tau_{ij}\vert}(A_{\tau_{ij}}) \cdot\exp(-\Vert x_i - \mu - x_j - \mu\Vert_2^2) = f_{\vert\tau_{ij}\vert}(A_{\tau_{ij}}) \cdot\exp(-\Vert x_i - x_j - 2\mu\Vert_2^2)$.
\end{itemize}

This completes the proof of the lemma.
\end{proof}

Now, we provide the proof of the main theorem.
\begin{proof}
Since $X$ and $Y$ are independent, we have the following equalities
\begin{align}
P(A_{ij} \vert A_{\tau_{ij}},y_i, y_j) 
&= \int_{x_i,x_j} P(A_{ij}, x_i, x_j \vert A_{\tau_{ij}}, y_i, y_j) dx_i dx_j  \nonumber\\
&= \int_{x_i,x_j} P(x_i,x_j \vert A_{\tau_{ij}},y_i, y_j)\cdot P(A_{ij} \vert A_{\tau_{ij}},y_i, y_j, x_i, x_j) dx_i dx_j\nonumber \\
&= \int_{x_i,x_j} P(x_i,x_j) P(A_{ij} \vert A_{\tau_{ij}},y_i, y_j, x_i, x_j) dx_i dx_j \nonumber\\
&= \E_{x_i,x_j} [P(A_{ij} \vert A_{\tau_{ij}},y_i, y_j, x_i, x_j)]
\label{eq:get_dynamic_expectation}
\end{align}

Using Lemma \ref{lsm-lemma} in Appendix A.3 and following the analysis in \eqref{eq:expectations}, we have
\begin{equation}
\begin{aligned}
&\E_{x_i,x_j} [P(A_{ij} = 1 \vert y_i = y_j, x_i, x_j)] = f_{\vert\tau_{ij}\vert} \cdot (4\sigma^2 + 1)^{-d/2} \\
&\E_{x_i,x_j} [P(A_{ij} = 1\vert y_i \neq y_j, x_i, x_j)]=f_{\vert\tau_{ij}\vert} \cdot (4\sigma^2 + 1)^{-d/2} \cdot \exp(-\frac{4\Vert\mu\Vert_2^2}{4\sigma^2+1})
\end{aligned}
\end{equation}

Using the pairwise KL-based bound from \cite[p.~428]{yu1997assouad} and Lemma \ref{kl-lemma}, we have
\begin{align}
I(Y^\ast, A) &\leq \frac{1}{\vert\mathcal{Y}\vert^2} \sum_{Y \in \mathcal{Y}} \sum_{Y' \in \mathcal{Y}} \mathbb{KL}(P_{A\mid Y} \Vert P_{A\mid Y'}) \nonumber\\
&\leq \max_{Y, Y' \in \mathcal{Y}} \mathbb{KL}(P_{A\mid Y} \Vert P_{A\mid Y'}) \nonumber\\
&\leq \max_{y_i, y_j, y'_i, y'_j} \binom{n}{2} \max_{i,j}\mathbb{KL}(P_{A_{ij}\mid A_{\tau_{ij}}, y_i, y_j} \Vert P_{A_{ij}\mid A_{\tau_{ij}}, y'_i, y'_j})\nonumber\\
&= \max_{y_i, y_j, y'_i, y'_j}\binom{n}{2}  \max_{i,j}\sum_{A_{ij}} P(A_{ij} \vert A_{\tau_{ij}}, y_i = y_j) \cdot\log\frac{P(A_{ij} \vert A_{\tau_{ij}},y_i = y_j)}{P(A_{ij} \vert A_{\tau_{ij}}, y'_i \neq y'_j)}\nonumber\\
&= \max_{y_i, y_j, y'_i, y'_j}\binom{n}{2} \max_{i,j}  \sum_{A_{ij}} \E_{x_i,x_j} [P(A_{ij} \vert A_{\tau_{ij}}, y_i, y_j, x_i, x_j)]\nonumber\\
&\quad \cdot \log\frac{\E_{x_i,x_j} [P(A_{ij} \vert A_{\tau_{ij}}, y_i, y_j, x_i, x_j)]}{\E_{x_i,x_j} [P(A_{ij} \vert A_{\tau_{ij}}, y'_i, y'_j, x_i, x_j)]} \nonumber\\
&= \binom{n}{2}\max_{i,j}\sum_{A_{ij}} \E_{x_i,x_j} [P(A_{ij} \vert y_i = y_j, A_{\tau_{ij}}, x_i, x_j)]\nonumber\\
&\quad \cdot\log\frac{\E_{x_i,x_j} [P(A_{ij} \vert y_i= y_j, A_{\tau_{ij}}, x_i, x_j)]}{\E_{x_i,x_j} [P(A_{ij} \vert y_i\neq y_j, A_{\tau_{ij}}, x_i, x_j)]}\nonumber\\
&< \binom{n}{2} \max_{i,j}\E_{x_i,x_j} [P(A_{ij} =1 \vert y_i = y_j, A_{\tau_{ij}}, x_i, x_j)]\nonumber\\
&\quad \cdot\log\frac{\E_{x_i,x_j} [P(A_{ij}=1 \vert y_i= y_j, A_{\tau_{ij}}, x_i, x_j)]}{\E_{x_i,x_j} [P(A_{ij}=1 \vert y_i\neq y_j, A_{\tau_{ij}},x_i, x_j)]}\nonumber\\
&= \binom{n}{2} \max_{i,j}f_{\vert\tau_{ij}\vert}(A_{\tau_{ij}})
\cdot (4\sigma^2 + 1)^{-d/2} \cdot\log \left(1/\exp(-\frac{4\Vert\mu\Vert_2^2}{4\sigma^2+1})\right)\nonumber\\
&= \binom{n}{2} \max_{i,j}f_{\vert\tau_{ij}\vert}(A_{\tau_{ij}})
\cdot 4(4\sigma^2 + 1)^{-1-d/2} \Vert\mu\Vert_2^2\nonumber\\
&\leq \binom{n}{2}  4(4\sigma^2 + 1)^{-1-d/2} \Vert\mu\Vert_2^2\nonumber\\
&=  2(n^2-n) (4\sigma^2 + 1)^{-1-d/2} \Vert\mu\Vert_2^2 
\label{eq:dynamic_latentI}
\end{align}

By Fano's inequality \cite{cover2012elements} and by plugging (\ref{eq:dynamic_latentI}) into (\ref{eq:fano_nlog2}), for the probability error to be at least $1/2$, it is sufficient for the lower bound to be greater than 1/2. Therefore
\begin{equation*}
\begin{aligned}
\P(\hat{Y} \neq \bar{Y}) 
\geq 1 -  \frac{I(Y^\ast, A) + \log 2}{n \log 2} &\geq \frac{1}{2}\\
1 -  \frac{2(n^2-n) (4\sigma^2 + 1)^{-1-d/2} \Vert\mu\Vert_2^2 + \log 2}{n \log 2} &\geq \frac{1}{2}
\end{aligned}
\end{equation*}

By solving for $n$ in the inequality above, we obtain that if 
\begin{equation}
(4\sigma^2 + 1)^{-1-d/2} \Vert\mu\Vert_2^2 \leq \frac{n-2}{4(n^2 - n)}\log 2
\label{result:DLTM}
\end{equation}
then we have that $\P(\hat{Y} \neq \bar{Y}) \geq \frac{1}{2}$.
\end{proof}


\section{Directed Network Models}

\subsection{Proof of Theorem \ref{thm5}}

\begin{proof}
For simplicity we use the shorthand notation $o_{ji} =\sum_{k=1}^{i-1} A_{jk}$ to denote the number of directed edges from node $j$ to the first $i$ nodes. Thus we have $w_{ji} = \frac{(o_{ji}+1)(\1[y_i = y_j]s + 1)}{\sum_{k=1}^{i-1}(o_{ki}+1)(\1[y_i = y_k]s + 1)}$ and $0 \leq o_{ji} \leq i-j-1$. This is because a node can never connect to its previous nodes according to the definition. Additionally $o_{ki} \leq i-m-1$ for $k \leq m$ since the first $m$ nodes are not connected to each other.

From Algorithm \ref{alg:1} we can observe that $\tilde{w}_{ji} \leq \frac{1}{m}$ and $\tilde{w}_{ji} \geq \min w_{ji}$. Thus we have $\tilde{w}_{ji} \geq \min w_{ji} \geq \frac{2(\1[y_i = y_j]s + 1)}{(i-m)(i+m-1)(s + 1)}$ by assuming $o_{ji} = 0$, $o_{ki} = i-m-1$ for every $k\leq m$, and $o_{li} = i-l-1$ for every $l > m$.

By using the pairwise KL-based bound from \cite[p.~428]{yu1997assouad} and Lemma \ref{kl-lemma}, we have

\begin{align}
I(Y^\ast, A) &\leq \frac{1}{\vert\mathcal{Y}\vert^2} \sum_{Y \in \mathcal{Y}} \sum_{Y' \in \mathcal{Y}} \mathbb{KL}(P_{A\mid Y} \Vert P_{A\mid Y'}) \nonumber\\
&\leq \max_{Y, Y' \in \mathcal{Y}} \mathbb{KL}(P_{A\mid Y} \Vert P_{A\mid Y'}) \nonumber\\
&\leq \max_{Y, Y' \in \mathcal{Y}} \binom{n}{2} \max_{j,i}\mathbb{KL}(P_{A_{ji}\mid A_{\tau_{ji}}, y_1,\ldots,y_i} \Vert P_{A_{ji}\mid A_{\tau_{ji}}, y'_1,\ldots, y'_i})\nonumber\\
&\leq \max_{Y, Y' \in \mathcal{Y}} \binom{n}{2}  \max_{j,i} m\tilde{w}_{ji} \cdot\log\frac{m\tilde{w}_{ji}}{m\tilde{w}'_{ji}}\nonumber\\
&\leq \binom{n}{2}\log \frac{1}{\frac{2m}{(n-m)(n+m-1)(s+1)}}\nonumber\\
&= (n^2-n)/2 \cdot \log \frac{(n-m)(n+m-1)(s+1)}{2m}\nonumber\\
&\leq (n^2-n)/2 \cdot \log \frac{n^2(s+1)}{8m}
\label{eq:PAI}
\end{align}

By Fano's inequality \cite{cover2012elements} and by plugging (\ref{eq:DynamicI}) into (\ref{eq:fano_nlog2}), for the probability error to be at least $1/2$, it is sufficient for the lower bound to be greater than 1/2. Therefore
\begin{equation*}
\begin{aligned}
\P(\hat{Y} \neq \bar{Y}) 
\geq 1 -  \frac{I(Y^\ast, A) + \log 2}{n \log 2} &\geq \frac{1}{2}\\
1 -  \frac{\frac{n^2-n}{2} \cdot \log \frac{n^2(s+1)}{m} + \log 2}{n \log 2} &\geq \frac{1}{2}
\end{aligned}
\end{equation*}

By solving for $n$ in the inequality above, we obtain that if 
\begin{equation}
\log \frac{s+1}{8m} \leq \frac{n-2}{n^2-n} \log 2 - 2\log n
\label{result:PAM}
\end{equation}
or equivalently, 
\begin{equation}
 \frac{s+1}{8m} \leq \frac{2^{(n-2)/(n^2-n)}}{n^2}
\label{result:PAMexp}
\end{equation}
then we have that $\P(\hat{Y} \neq \bar{Y}) \geq \frac{1}{2}$.
\end{proof}

\subsection{Proof of Theorem \ref{thm6}}
\begin{proof}
From Algorithm \ref{alg:1} we can observe that $\tilde{w}_{ji} \leq \frac{1}{m}$ and $\tilde{w}_{ji} \geq \min w_{ji}$. Thus we have for any node $j \in \{i-m,\ldots, i-1\}$, $\tilde{w}_{ji} \geq \min_{j \in \{i-m,\ldots, i-1\}} w_{ji} \geq \frac{p(\1[y_i = y_j]s + 1)}{m(s+1)}$ by assuming $y_i = y_k$ for every $k\in \{i-m,\ldots, i-1\}$. Similarly, for any node $j \in \{1,\ldots, i-m-1\}$, we have $\tilde{w}_{ji} \geq \min_{j \in \{1,\ldots, i-m-1\}} w_{ji} \geq \frac{(1-p)(\1[y_i = y_j]s + 1)}{(i-m-1)(s + 1)}$ by assuming $y_i = y_k$ for every $k\in \{1,\ldots, i-m-1\}$.

By using the pairwise KL-based bound from \cite[p.~428]{yu1997assouad} and Lemma \ref{kl-lemma}, we have
\begin{align}
I(Y^\ast, A) &\leq \frac{1}{\vert\mathcal{Y}\vert^2} \sum_{Y \in \mathcal{Y}} \sum_{Y' \in \mathcal{Y}} \mathbb{KL}(P_{A\mid Y} \Vert P_{A\mid Y'}) \nonumber\\
&\leq \max_{Y, Y' \in \mathcal{Y}} \mathbb{KL}(P_{A\mid Y} \Vert P_{A\mid Y'}) \nonumber\\
&\leq \max_{Y, Y' \in \mathcal{Y}} m(n-m) \max_{j \in \{i-m,\ldots, i-1\}}\mathbb{KL}(P_{A_{ji}\mid A_{\tau_{ji}}, y_1,\ldots,y_i} \Vert P_{A_{ji}\mid A_{\tau_{ji}}, y'_1,\ldots, y'_i}) \nonumber\\
&\qquad\quad + \binom{n-m}{2}\max_{i', j' \in \{1,\ldots, i'-m-1\}}\mathbb{KL}(P_{A_{ji}\mid A_{\tau_{j'i'}}, y_1,\ldots,y_i'} \Vert P_{A_{j'i'}\mid A_{\tau_{j'i'}}, y'_1,\ldots, y'_{i'}}) \nonumber\\
&\leq m(n-m) \log \frac{1}{\frac{p}{s+1}} + \binom{n-m}{2} \log \frac{1}{\frac{m(1-p)}{(n-m-1)(s+1)}}\nonumber\\
&\leq m(n-m)\log \frac{s+1}{p} + \binom{n-m}{2} \log \frac{(n-m-1)(s+1)}{m(1-p)} \nonumber\\
&\leq \frac{n^2}{4} \log \frac{s+1}{p} + \frac{n^2}{4} \log \frac{n(s+1)}{m(1-p)} \nonumber\\
&= \frac{n^2}{4} \left( \log\frac{(s+1)^2}{mp(1-p)} + \log n \right)
\label{eq:swI}
\end{align}

By Fano's inequality \cite{cover2012elements} and by plugging (\ref{eq:DynamicI}) into (\ref{eq:fano_nlog2}), for the probability error to be at least $1/2$, it is sufficient for the lower bound to be greater than 1/2. Therefore
\begin{equation*}
\begin{aligned}
\P(\hat{Y} \neq \bar{Y}) 
\geq 1 -  \frac{I(Y^\ast, A) + \log 2}{n \log 2} &\geq \frac{1}{2}\\
1 -  \frac{\frac{n^2}{4} \left( \log\frac{(s+1)^2}{mp(1-p)} + \log n \right) + \log 2}{n \log 2} &\geq \frac{1}{2}
\end{aligned}
\end{equation*}

By solving for $n$ in the inequality above, we obtain that if 
\begin{equation}
\log \frac{(s+1)^2}{mp(1-p)} \leq \frac{2\log 2}{n} - \frac{4\log 2}{n^2} - \log n
\label{result:swm}
\end{equation}
or equivalently, 
\begin{equation}
 \frac{(s+1)^2}{mp(1-p)} \leq \frac{2^{2(n-2)/n^2}}{n}
\label{result:swmexp}
\end{equation}
then we have that $\P(\hat{Y} \neq \bar{Y}) \geq \frac{1}{2}$.
\end{proof}

\end{document}